\newtheorem{remark}{Remark}
\newtheorem{theorem}{Theorem}
\newtheorem{lemma}{Lemma}
\newtheorem{definition}{Definition}
\newtheorem{example}{Example}
\newcommand{\set}[1]{\left\{ #1 \right\}}
\newcommand{\cF}{\mathcal{F}}
\newcommand{\cN}{\mathcal{N}}
\newcommand{\cK}{\mathcal{K}}
\newcommand{\true}{\mathrm{True}}
\newcommand{\false}{\mathrm{False}}
\newcommand{\ucb}{{\mathrm{UCB}}}
\newcommand{\lcb}{{\mathrm{LCB}}}
\newcommand{\opt}{{\mathrm{opt}}}
\newcommand{\abs}[1]{\left| #1 \right|}
\newcommand{\bOne}[1]{\mathds{1} \! \left\{#1\right\}}
\newcommand{\bracket}[1]{\left(#1\right)}
\newcommand{\EE}[1]{\mathbb{E} \left[#1\right]}
\newcommand{\PP}[1]{\mathbb{P} \left(#1\right)}
\DeclareMathOperator*{\argmin}{argmin}
\mathchardef\mhyphen="2D
\newcommand{\ch}{{\tt Ch}}
\title{
% Regret Bounds for
Improved Bandits in Many-to-One Matching Markets with Incentive Compatibility
}
\author {
    % Authors
    Fang Kong, Shuai Li\thanks{Corresponding author.}
}
\begin{document}

\maketitle

\begin{abstract}

Two-sided matching markets have been widely studied in the literature due to their rich applications. Since participants are usually uncertain about their preferences, online algorithms have recently been adopted to learn them through iterative interactions. An existing work initiates the study of this problem in a many-to-one setting with responsiveness. However, their results are far from optimal and lack guarantees of incentive compatibility. We first extend an existing algorithm for the one-to-one setting to this more general setting and show it achieves a near-optimal bound for player-optimal regret. Nevertheless, due to the substantial requirement for collaboration, a single player's deviation could lead to a huge increase in its own cumulative rewards and a linear regret for others. In this paper, we aim to enhance the regret bound in many-to-one markets while ensuring incentive compatibility. We first propose the adaptively explore-then-deferred-acceptance (AETDA) algorithm for responsiveness setting and derive an upper bound for player-optimal stable regret while demonstrating its guarantee of incentive compatibility. To the best of our knowledge, it constitutes the first polynomial player-optimal guarantee in matching markets that offers such robust assurances without known $\Delta$, where $\Delta$ is some preference gap among players and arms. 
We also consider broader substitutable preferences, one of the most general conditions to ensure the existence of a stable matching and cover responsiveness. We devise an online DA (ODA) algorithm and establish an upper bound for the player-pessimal stable regret for this setting. 
\end{abstract}

%!TEX root =  main.tex
\section{Introduction}\label{sec:intro}

The problem of two-sided matching markets has been studied for a long history due to its wide range of applications in real life including the labor market and college admission \citep{gale1962college,roth1984evolution,roth1992two,abdulkadirouglu1999house,epple2006admission,fu2014equilibrium}. 
There are two sides of market participants, e.g., employers and workers in the labor market, and each side has a preference ranking over the other side. 
The matching reflects the bilateral nature of exchange in the market. For example, a worker works for an employer and the employer employs this worker.  
Stability is a key concept describing the equilibrium of a matching, which ensures the current bilateral exchange cannot be easily broken.  
A rich line of works study how to find a stable matching in the market \citep{gale1962college,kelso1982job,roth1984evolution,roth1992two,erdil2019efficiency}.
However, all of them assume the preferences of market participants are known \emph{a priori}, which may not be satisfied in practice. 
For example in labor markets, workers usually have unknown preferences over employers since they do not know whether they like the task type or the employer. 
With the emergence of online marketplaces such as online labor market Upwork and crowdsourcing platform Amazon Mechanical Turk where employers have numerous similar tasks to delegate, workers are able to learn the uncertain preferences during the iterative matching process with employers through these tasks.

Multi-armed bandit (MAB) is a core problem that characterizes the learning process during iterative interactions when faced with uncertainty  \citep{auer2002finite,lattimore2020bandit}. 
There are also two sides of agents: a player on one side and $K$ arms on the other side. The player has unknown preferences over arms. At each time, it selects an arm and receives a reward. 
The player's objective is to maximize the cumulative reward over a specified horizon. 
To better measure the performance of the player's strategy, an equivalent objective of minimizing the cumulative regret is widely studied, which is defined as the cumulative difference between the reward of the optimal arm and that of the selected arms.

Recently, a rich line of works study the bandit learning problem in matching markets where more than one player and arms exist. 
These works study the case where players have unknown preferences over arms and arms can determine their preferences over players based on some known utilities such as the profile of workers in online labor markets.
To characterize the stability of the learned matching, the objective of stable regret is adopted and studied \citep{das2005two,liu2020competing,liu2021bandit,sankararaman2021dominate,basu21beyond,kong2022thompson,zhang2022matching,kong2023player,wang2022bandit}.
Previous works mainly focus on two types of objectives: the player-optimal stable regret and the player-pessimal stable regret. 
The former is defined as the cumulative difference between the reward of the arm in the players' most preferred stable matching and the accumulated reward by the player. 
The latter is defined compared with the reward of the arm in the players' least preferred stable matching. 
\citet{liu2020competing} first study the centralized version where a central platform assigns an allocation of arms to players in each round and provide theoretical guarantees. 
Since such a platform may not always exist in real applications, the following works mainly focus on the decentralized setting where each player makes her own decision \citep{liu2021bandit,sankararaman2021dominate,basu21beyond,kong2022thompson,maheshwari2022decentralized}. 
These works only achieve guarantees on the player-pessimal stable regret \citep{liu2021bandit,kong2023player} or study special markets where unique stable matching exists. Until recently, \citet{zhang2022matching} and \citet{kong2023player} independently propose algorithms that can reach the player-optimal stable matching.

All of the above works study the one-to-one matching markets where each player proposes to one arm at a time and each arm could accept at most one player. 
The many-to-one setting is more general and common in real life such as in labor markets where an employer usually has a certain quota and can recruit a group of workers \citep{roth1984stability,roth1992two,abdulkadirouglu2005college,che2019stable}. 
\citet{wang2022bandit} initiate the study in many-to-one markets by considering that arms have responsive preferences. 
However, their algorithm is only able to achieve player-pessimal stable matching and lacks guarantees on incentive compatibility.  
Incentive compatibility is a crucial property in multi-player systems as it ensures players are incentivized to act in ways that align with desired system outcomes, thereby promoting cooperation and efficiency rather than encouraging competitive or destructive behaviors. Deriving algorithms that can achieve better regret and enjoy guarantees on this property is important in matching markets.

In this paper, we aim to provide algorithms with improved regret guarantee and incentive compatibility for many-to-one markets. 
For generality, we also study the decentralized setting.
We propose an adaptive explore-then-DA (AETDA) algorithm for markets with responsive preferences and derive $O(N\min\set{N,K}C\log T/\Delta^2)$  upper bound for the player-optimal stable regret as well as a guarantee of incentive compatibility, where $N$ is the number of players, $K$ is the number of arms, $C$ is arms' total capacities, $T$ is the horizon, and $\Delta$ is some preference gap among players and arms. 
To the best of our knowledge, it is the first guarantee for the player-optimal regret in decentralized many-to-one markets and is also the first that simultaneously enjoys such robust assurance in one-to-one markets without known $\Delta$. 
Since arms preferences may possess a combinatorial structure which may not be well characterized by responsiveness, we also consider a more general setting with \textit{substitutability}~\citep{roth1992two}, one of the most generally known conditions to ensure the existence of a stable matching and naturally holds under responsiveness \citep{roth1992two,abdulkadirouglu2005college}.
We design an online deferred acceptance (ODA) algorithm for this more general setting and prove that the regret against the player-pessimal stable matching is bounded by $O(NK\log T/\Delta^2)$.
Table \ref{table:comparison} provides a comprehensive comparison between our work and related results. 
% As compared in Table \ref{table:comparison}, this result not only works under a more general setting but also achieves a great advantage over \citet{wang2022bandit}. 

\begin{table*}[htb!]
\centering
\begin{tabular}{lll}
\toprule 
  & Regret bound       & Setting                     \\\hline
\rule{0pt}{13pt}\multirow{2}{*}{\citet{liu2020competing}} & $\displaystyle O\bracket{{K\log T}/{\Delta^2}} *\#$  & one-one, known $\Delta$, incentive, $\mathrm{gap}_1$\\
% , \tiny{}
& $\displaystyle O\bracket{{NK\log T/\Delta^2}}\#$              & one-one, $\mathrm{gap}_5$ \\\hline
\rule{0pt}{20pt}\citet{liu2021bandit}                                                                                 & $\displaystyle O\bracket{\frac{N^5K^2\log^2 T}{\varepsilon^{N^{4}}\Delta^2}}$                                                       & one-one, $\mathrm{gap}_5$                        \\ \hline
\rule{0pt}{12pt}\multirow{2}{*}{\citet{sankararaman2021dominate} }& $\displaystyle O\bracket{{NK\log T}/{\Delta^2}}$ & {one-one (serial dictatorship), -}   \\ &
              $\displaystyle\Omega\bracket{{N\log T}/{\Delta^2}}$                &       incentive, $\mathrm{gap}_1$               \\\hline
\rule{0pt}{13pt}\multirow{2}{*}{\citet{basu21beyond}} & $\displaystyle O\bracket{K\log^{1+\varepsilon} T + 
2^{(\frac{1}{\Delta^2})^{\frac{1}{\varepsilon}}} } *$                 & one-one, $\mathrm{gap}_5$ \\
                       &  $\displaystyle O\bracket{{NK\log T}/{\Delta^2}}$                &   one-one (uniqueness), $\mathrm{gap}_1$                                     \\\hline
\rule{0pt}{15pt}\citet{maheshwari2022decentralized}                                                                                 &   $O\bracket{C'NK\log T/\Delta^2}$                                                     & one-one ($\alpha$-reducible), $\mathrm{gap}_1$                         \\ \hline
\rule{0pt}{20pt}\citet{kong2022thompson}                                                                                 & $\displaystyle O\bracket{\frac{N^5K^2\log^2 T}{\varepsilon^{N^{4}}\Delta^2}}$                                                       & one-one, $\mathrm{gap}_5$                       \\ \hline
\rule{0pt}{15pt}\citet{zhang2022matching}                                                                                 & $\displaystyle O\bracket{K\log T/\Delta^2}*$                                                       & one-one, $\mathrm{gap}_5$                    \\ \hline
\rule{0pt}{12pt}\multirow{2}{*}{\citet{kong2023player} }& \multirow{2}{*}{$\displaystyle O\bracket{{K\log T}/{\Delta^2}}*$} & one-one, $\mathrm{gap}_4$   \\ &   &  responsiveness \textbf{(ours)}, $\mathrm{gap}_4$                         \\\hline
\rule{0pt}{12pt}\multirow{3}{*}{\citet{wang2022bandit}}& $\displaystyle O\bracket{{K\log T}/{\Delta^2}}*\#$ &  responsiveness, known $\Delta$, $\mathrm{gap}_1$                     \\
  & $\displaystyle O\bracket{{NK^3\log T}/{\Delta^2}}\#$  & responsiveness, $\mathrm{gap}_5$ \\
& {$\displaystyle O\bracket{\frac{N^5K^2\log^2 T}{\varepsilon^{N^{4}}\Delta^2}}$} & responsiveness, $\mathrm{gap}_5$                        \\ \hline
\rule{0pt}{12pt}\multirow{2}{*}{\textbf{Ours}} & {$\displaystyle O\bracket{\frac{N\min\set{N,K}C\log T}{\Delta^2}}*$} & responsiveness, incentive, $\mathrm{gap}_3$ 
\\ & $\displaystyle O\bracket{{NK\log T}/{\Delta^2}}$ &  substitutability, $\mathrm{gap}_2$
                      \\

          \bottomrule 
\end{tabular}
\caption{Comparisons of settings and regret bounds with most related works. $*$ represents the player-optimal stable regret and bounds without labeling $*$ are for player-pessimal stable regret, 
$\#$ represents the centralized setting.  
$N,K,\Delta,C, \varepsilon, C'$ are the number of players and arms, some preference gap among players and arms, the total capacities of all arms under the responsiveness condition, the hyper-parameter of algorithms which can be very small, and the parameter related to the unique stable matching condition which can grow exponentially in $N$,  respectively.   
`Incentive' means that there is a guarantee for incentive compatibility. 
The definition of $\Delta$ requires particular care in different results. 
It may be defined as the minimum preference gap between the player-optimal stable arm and the next arm among all players (labeled as $\mathrm{gap}_1$); the minimum preference gap between the player-pessimal stable arm and the next arm among all players (labeled as $\mathrm{gap}_2$); the minimum preference gap between any arms that have higher ranking than the arm after the player-optimal stable arm (labeled as $\mathrm{gap}_3$); the minimum preference gap between any arms that have higher ranking than $\min\set{N+1,K}$ (labeled as $\mathrm{gap}_4$); and the minimum preference gap between any different arms among all players (labeled as $\mathrm{gap}_5$). Based on the fact that the player-optimal stable arm must be the first $\min\set{N,K}$-ranked (proved in Appendix), it holds that $\mathrm{gap}_1>\mathrm{gap}_3>\mathrm{gap}_4>\mathrm{gap}_5$, and $\mathrm{gap}_2>\mathrm{gap}_5$. 
} 
\label{table:comparison}
\end{table*}

% \fang{introduce stricted substi; \textit{restricted} footnote refer to; delete $,$}

% Specific to the responsiveness setting, we also propose a more efficient explore-then-DA (ETDA) algorithm and provide an $O(K\log T/\Delta^2)$ upper bound for the player-optimal stable regret. 
% This is the first player-optimal guarantee for decentralized many-to-one markets and achieves the same order as the state-of-the-art result in the one-to-one setting. 
% We also demonstrate the convergence of our algorithms and their advantage over baselines in a series of experiments.

%!TEX root =  main.tex
\section{Related Work}\label{sec:related}

The matching market model characterizes many applications such as labor market \citep{roth1984evolution}, house allocation \citep{abdulkadirouglu1999house}, college admission and marriage problems \citep{gale1962college}, among which the many-to-one setting is very common and widely studied \citep{roth1992two}. 
Responsiveness and substitutability are the most generally known conditions to guarantee the existence of a stable matching \citep{kelso1982job,roth1984stability,abdulkadirouglu2005college} and the deferred acceptance (DA) algorithm is a classical offline algorithm to find a stable matching under this property \citep{kelso1982job,roth1984stability}.

For simplicity, we refer to the setting where one-side participants (players) have unknown preferences as the online setting. 
This line of works relies on the technique of MAB, a classical online learning framework with a single player and $K$ arms \citep{lattimore2020bandit}. 
The explore-then-commit (ETC) \citep{garivier2016explore}, upper confidence bound (UCB) \citep{auer2002finite}, Thompson sampling (TS) \citep{agrawal2012analysis} and elimination \citep{auer2010ucbelimination} algorithms are common strategies to obtain $O(K\log T/\Delta)$ regret where $\Delta$ is the minimum suboptimality gap among arms. 

Multiple-player MAB (MP-MAB) generalizes the standard MAB problem by considering more than one player in the environment. 
In this setting, each player selects an arm at each time and a player will receive nothing if it collides with others by selecting the same arm. 
The MP-MAB problem has been studied in both homogeneous and heterogeneous settings. The former assumes different players share the same preference over arms \citep{rosenski2016multi,bubeck2021cooperative} and the latter allows players to have different preferences \citep{bistritz2018distributed,shi2021heterogeneous}. 
Both settings aim to minimize the collective cumulative regret of all players. 
% which is compared with the maximum collective reward received by all players. 

The matching market problem is different from the above MP-MAB framework by considering that each arm also has a preference ranking over players.  
When multiple players select one arm, the player preferred most by the arm would not be collided and would gain a reward.   
The objective in this setting is to learn a stable matching and minimize the stable regret for players. 
\citet{das2005two} first introduce the bandit learning problem in one-to-one matching markets and explore the empirical performances of the proposed algorithms.  \citet{liu2020competing} initiate the theoretical study on this problem. They first consider the centralized setting where a central platform assigns allocations to players in each round. 
Later,  \citet{sankararaman2021dominate}, \citet{basu21beyond} and \citet{maheshwari2022decentralized} successively study this setting in a decentralized manner where players make their own decisions without a central platform. 
These works additionally assume the preferences of participants satisfy some constraints to ensure the uniqueness of the stable matching.  
For a general decentralized one-to-one market, \citet{liu2021bandit} and \citet{kong2022thompson} propose UCB and TS-type algorithms, respectively. However, they only derive guarantees on the player-pessimal stable regret. 
% with guarantees for player-pessimal stable regret 
Until recently, the theoretical analysis for the player-optimal stable regret has been derived by \citet{zhang2022matching} and \citet{kong2023player} independently. 

Due to the generality when modeling real applications, \citet{wang2022bandit} start to study the bandit problem in many-to-one settings. They assume arms have responsive preferences and derive algorithms in both centralized and decentralized settings. For the decentralized setting, only an upper bound for the player-pessimal stable regret is provided. 
Table \ref{table:comparison} compares our results with the most related works for matching markets. 
As shown in the table, our results not only work under a more general setting but also achieve a great advantage over \citet{wang2022bandit}.  
% and the upper bound is far from optimal. Please see Table \ref{table:comparison} for a comprehensive comparison among these works. 
% with the upper bound $O(N^5K^2\log^2 T/(\varepsilon^{N^4}\Delta^2))$ where $\varepsilon\in(0,1)$ is a hyper-parameter. 
% \shuai{introduce table 1 somewhere. table 1 does not introduce N,K,Delta}
%!TEX root =  main.tex
\section{Setting}\label{sec:setting}

The two-sided market consists of $N$ players and $K$ arms. Denote the player and the arm set as $\cN = \set{p_1,p_2,\ldots,p_N}$ and $\cK=\set{a_1,a_2,\ldots,a_K}$, respectively. 
Just as in common applications such as the online labor market, players have preferences over individual arms.  
The relative preference of player $p_i$ for arm $a_j$ can be quantified by a real value $\mu_{i,j}\in (0,1]$, which is unknown and needs to be learned during interactions with arms.
For each player $p_i$, we assume $\mu_{i,j}\neq\mu_{i,j'}$ for distinct arms $a_j,a_{j'}$ as in previous works \cite{kelso1982job,roth1984stability,liu2020competing,liu2021bandit,kong2023player,wang2022bandit}. 
And $\mu_{i,j}>\mu_{i,j'}$ implies that player $p_i$ prefers $a_j$ to $a_{j'}$. 
For the other side of participants, arms are usually certain of their preferences for players based on some known utilities, e.g., the profiles of workers in the online labor markets scenario. In many-to-one markets, when faced with a set $P \subseteq \cN$ of players, the arm can determine which subset of $P$ it prefers most. Denote $\ch_j(P)$ as this choice of arm $j$ when faced with $P$. Then for any $P' \subseteq P$, arm $a_j$ prefers $\ch_j(P)$ to $P'$.

At each round $t=1,2,\ldots$, each player $p_i \in \cN$ proposes to an arm $A_i(t) \in \cK$. Let $A^{-1}_j(t) = \set{p_i: A_i(t)=a_j}$ be the set of players who propose to $a_j$. When faced with the player set $A^{-1}_j(t)$, arm $a_j$ only accepts its most preferred subset $\ch_j(A^{-1}_j(t))$ and would reject others. 
Once $p_i$ is successfully accepted by arm $A_i(t)$, it receives a utility gain $X_{i,A_i(t)}(t)$, which is a $1$-subgaussian random variable with expectation $\mu_{i,A_i(t)}$. 
Otherwise, it receives  $X_{i,A_i(t)}(t)=0$. 
We further denote $\bar{A}_i(t)$ as $p_i$'s matched arm at round $t$. Specifically, $\bar{A}_i(t)=A_i(t)$ if $p_i$ is successfully matched and $\bar{A}_{i}(t) = \emptyset$ otherwise. 
Inspired by real applications such as labor market where workers usually update their working experience on their profiles, we also assume each player can observe the successfully matched players $\ch_j(A^{-1}_j(t)) = \bar{A}_{j}^{-1}(t) = \set{p_i: \bar{A}_i(t)=a_j}$ with each arm $a_j \in \cK$ as previous works \cite{liu2021bandit,kong2022thompson,ghosh2022nonstationary,kong2023player,wang2022bandit}.

The matching $\bar{A}(t)$ at round $t$ is the set of all pairs $(p_i,\bar{A}_i(t))$. 
Stability of matchings is a key concept that describes the state in which any player or arm has no incentive to abandon the current partner \citep{gale1962college,roth1992two}. 
Formally, a matching is stable if it cannot be improved by any arm or player-arm pair.
Specifically, an arm $a_j$ improves $\bar{A}(t)$ if $\ch_j( \bar{A}^{-1}_j(t)) \neq \bar{A}_j^{-1}(t)$. That's to say, arm $a_j$ would not accept all members in $\bar{A}^{-1}_j(t)$ when faced with this set. A pair $(p_i,a_j)$ improves the matching $\bar{A}(t)$ if $p_i$ prefers $a_j$ to $\bar{A}_i(t)$ and $a_j$ would accept $p_i$ when faced with $\bar{A}^{-1}_j(t)\cup \set{p_i}$, i.e., $p_i \in \ch_j( \bar{A}^{-1}_j(t)\cup \set{p_i} )$.
That's to say, $p_i$ prefers arm $a_j$ than its current partner and $a_j$ would also accept $p_i$ if $p_i$ apply for $a_j$ together with $a_j$'s current partners
\citep{kelso1982job,abdulkadirouglu2005college,roth1992two}.

Responsive preferences are widely studied in many-to-one markets which guarantee the existence of a stable matching \citep{roth1992two,wang2022bandit}. 
Under this setting, each arm $a_j$ has a preference ranking over individual players and a capacity $C_j>0$. When a set of players propose to $a_j$, it accepts $C_j$ of them with the highest preference ranking. 
This case recovers the one-to-one matching when $C_j=1$. 
For convenience, define $C=\sum_{j\in[K]} C_j$ as the total capacities of all arms. 
Apart from responsiveness, we also consider a more general substitutability setting in Section \ref{sec:decen}.

In this paper, we study the bandit problem in many-to-one matching markets with responsive and substitutable preferences. Under both properties, the set $M^*$ of stable matchings between $\cN$ and $\cK$ is non-empty \citep{roth1992two,kelso1982job}. 
For each player $p_i$, let $\overline{m}_i\in[K]$ and $\underline{m}_i\in[K]$ be the index of $p_i$'s most and least favorite arm among all arms that can be matched with $p_i$ in a stable matching, respectively.
The objective of each player $p_i$ is to minimize the cumulative stable regret defined as the cumulative difference between the reward of the stable arm and that the player receives during the horizon. The player-optimal and pessimal stable regret are defined as
\begin{align}
\overline{R}_i(T) = \EE{\sum\nolimits_{t=1}^T \bracket{\mu_{i,\overline{m}_i} - X_{i,A_i(t)}(t)} } \,,\\
\underline{R}_i(T) = \EE{\sum\nolimits_{t=1}^T \bracket{\mu_{i,\underline{m}_i} - X_{i,A_i(t)}(t)} } \,,
\end{align}
respectively \citep{liu2020competing,zhang2022matching,kong2023player,wang2022bandit}. The expectation is taken over by the randomness in reward gains and the players' policies. 
For convenience, we define the preference gaps to measure the hardness of the problem. 

\begin{definition}\label{def:gap}
    For each player $p_i$ and arm $a_j \neq a_{j'}$, define $\Delta_{i,j,j'} = \abs{\mu_{i,j}-\mu_{i,j'}}$ as the preference gap of $p_i$ between $a_j$ and $a_{j'}$. Let $\rho_i$ be the preference ranking of player $p_i$, where $\rho_{i,k}$ represents the arm ranked $k$-th in $p_i$'s preference. With a little abuse of notation, denote $\rho_i(a_j)$ as the rank of $a_j$ in $p_i$'s preference. Define $\Delta_{\min} = \min_{i,k\in[\min\set{N,K-1}]}\Delta_{i,\rho_{i,k},\rho_{i,k+1}}$ as the minimum preference gap between the arm ranked the first $\min\set{N+1,K}$-th among all players, $\Delta_{\overline{m}} = \min_{i,k\in[\rho_i(\overline{m}_i)]}\Delta_{i,\sigma_{i,k},\sigma_{i,k+1}}$ as the minimum preference gap between the arm ranked the first $(\rho_i(\overline{m}_i)+1)$-th among all players and $\Delta_{\underline{m}} = \min_{i,k>\rho_i(\underline{m}_i)}\Delta_{i,\underline{m}_i,\rho_{i,k}}$ as the minimum preference gap between $\underline{m}_i$ and any arm that has lower ranking than $\underline{m}_i$ among all players.  
\end{definition}

%!TEX root =  main.tex
\section{An Extension of \citet{kong2023player}
}\label{sec:etgs}

Recall that \citet{kong2023player} provide a near-optimal bound $O(K\log T/\Delta_{\min}^2)$ for player-optimal stable regret in one-to-one markets. We first provide an extension of their algorithm, explore-then-deferred-acceptance (ETDA), for many-to-one markets with responsiveness and $N\le K\cdot \min_{j\in[K]} C_j$. 

The deferred acceptance (DA) algorithm is designed to find a stable matching when both sides of participants have known preferences. The algorithm proceeds in multiple steps. At the first step, all players propose to their most preferred arm and each arm rejects all but their favorite subset of players among those who propose to it. Such a process continues until no rejection happens. It has been shown that the final matching is the player-optimal stable matching under responsiveness \citep{gale1962college,kelso1982job,roth1992two}.

Since players are uncertain about their preferences, the ETDA algorithm lets players first explore to learn this knowledge and then follow DA to find a stable matching. 
Specifically, each player first estimates an index in the first $N$ rounds (phase $1$); and then explores its unknown preferences in a round-robin way based on its index (phase $2$). After estimating a good preference ranking, it will follow DA to find the player-optimal stable matching (phase $3$). 
Compared with \citet{kong2023player}, the difference mainly lies in the first phase of estimating indices for players where multiple players can share the same index in many-to-one markets. For completeness, we provide the detailed algorithm in Appendix \ref{sec:etda:appendix} and the theoretical guarantees below.

\begin{theorem}\label{thm:etda}
Under the responsiveness condition, when $N\le K\cdot \min_{j\in[K]}C_j$, the player-optimal stable regret of each player $p_i$ by following ETDA satisfies
   \begin{align}
        \overline{R}_i(T) &\le O\bracket{K\log T/\Delta_{\min}^2}  \,.
    \end{align}
\end{theorem}
Due to the space limit, the proof of Theorem \ref{thm:etda} is deferred to Appendix \ref{sec:proof:etda}. Under the same decentralized setting, this player-optimal stable regret bound is even $O(N^5K\log T/\varepsilon^{N^4})$ better than the weaker player-pessimal stable regret bound in \citet{wang2022bandit}. 
Such a result also achieves the same order as the state-of-the-art analysis in the reduced one-to-one setting \citep{kong2023player}.

Though achieving better regret bound, the ETDA algorithm is not incentive compatible. We can consider the market where the player-optimal stable arm of a player $p_i$ is its least preferred arm. 
If $p_i$ always reports that it does not estimate the preference ranking well, then the stopping condition of phase $2$ is never satisfied. In this case,  all of the other players fail to find a stable matching and suffer $O(T)$ regret, while this player is always matched with more preferred arms than that in the stable matching during phase $2$, resulting in $O(T)$ improvement in the cumulative rewards.  Thus player $p_i$ lacks the incentive to always act as the algorithm requires. 
To improve the algorithm in terms of incentive compatibility, we further propose a novel algorithm in the next section. 

%!TEX root =  main.tex

\section{Adaptively ETDA (AETDA) Algorithm}\label{sec:aetda}

In this section, we propose a new algorithm adaptively ETDA (AETDA) for many-to-one markets with responsive preferences which is incentive compatible. 
To ensure each player has a chance to be matched, we simply assume $N\le C$ as existing works in many-to-one and one-to-one markets \citep{liu2020competing,liu2021bandit,zhang2022matching,kong2023player,wang2022bandit}, which relaxes the requirement of ETDA in the previous section.

For simplicity, we present the main algorithm in a centralized manner in Algorithm \ref{alg:AETDA}, i.e., a central platform coordinates players' selections in each round. The discussion on how to extend it to a decentralized setting is provided later.

\begin{algorithm}[thb!]
    \caption{centralized adaptively explore-then-deferred-acceptance (AETDA, from the view of the central platform)}\label{alg:AETDA}
    \begin{algorithmic}[1]
    \STATE Initialize: $S_i = \cK, E_i = \true$ for each player $p_i \in \cN$ \label{alg:AETDA:initial}
    \FOR{round $t=1,2,...,$}
        \STATE Allocate $A_i(t) \in S_i$ to each player $p_i$ with $E_i = \true$ in a round-robin manner; Allocate $A_i(t) = \mathrm{opt}_i$ to each player $p_i$ with $E_i = \false$ \label{alg:AETDA:exploit}
        \STATE Receive the estimation status $\mathrm{opt}_i$ from each $p_i$ \label{alg:AETDA:receiveOpt}
        \FOR{each player $p_i\in \cN$ with $\mathrm{opt}_i \neq -1$}
                \STATE $E_i = \false$ \label{alg:AETDA:opt:true:update}
        \ENDFOR
        \FOR{each player $p_i\in \cN$ and $a_j \in S_i$ with $p_i \notin \ch_j(\set{p_{i'}: \mathrm{opt}_{i'}=a_j }\cup \set{p_i})$} \label{alg:aetda:detect:start} 
                \STATE $S_i = S_i \setminus \set{a_j}$ \label{alg:AETDA:update:Si}
                \STATE Set $E_i = \true$ if $E_i=\false$ and $a_j = \mathrm{opt}_i$ \label{alg:aetda:detect:E:false}
    \ENDFOR\label{alg:aetda:detect:end} 
    \ENDFOR
    \end{algorithmic}
\end{algorithm}

Intuitively, AETDA integrates the learning process into each step of DA instead of estimating the full preference ranking well before running DA. More specifically, each player explores arms in a round-robin manner in each step to learn its most preferred arm and then focuses on this arm before being rejected in the corresponding step of DA. 
For each player $p_i$, the algorithm maintains $S_i$ to represent the available arm set that has not rejected $p_i$ in previous steps
and $E_i$ to represent the exploration status. Specifically, $E_i=\true$ means that $p_i$ still needs to explore arms in a round-robin manner to find its most preferred arm in $S_i$, and $E_i=\false$ means that $p_i$ now focuses on its most preferred available arm. At the beginning of the algorithm, $S_i$ is initialized as the full arm set $\cK$ and $E_i$ is initialized as $\true$ (Line \ref{alg:AETDA:initial}).

For players with $E_i = \true$, the central platform would allocate the arm $A_i(t)\in S_i$ in a round-robin manner.
And for those players with $E_i=\false$, they can just focus on the determined optimal arm $\mathrm{opt}_i$ (Line \ref{alg:AETDA:exploit}).
After being matched in each round, each player $p_i$ would update its empirical mean $\hat{\mu}_{i,A_i(t)}$ and the number of observed times $T_{i,A_i(t)}$ on arm $A_i(t)$ as $\hat{\mu}_{i,A_i(t)} = ({\hat{\mu}_{i,A_i(t)}\cdot T_{i,A_i(t)} + X_{i,A_i(t)}(t) })/{(T_{i,A_i(t)}+1)}\,,\ T_{i,A_i(t)} = T_{i,A_i(t)}+1$. 
For the preference value $\mu_{i,j}$ towards each arm $a_j$, $p_i$ also maintains a confidence interval at $t$ with the upper bound $\ucb_{i,j}
:= \hat{\mu}_{i,j}+\sqrt{{6\log T}/{T_{i,j}}}$ 
and lower bound $\lcb_{i,j}
:=\hat{\mu}_{i,j}-\sqrt{{6\log T}/{T_{i,j}}}$.  
If $T_{i,j}=0$, $\ucb_{i,j}$ and $\lcb_{i,j}$ are set as $\infty$ and $-\infty$, respectively. 
When the $\ucb$ of $a_j$ is even lower than the $\lcb$ of other available arms, $a_j$ is considered to be less preferred. Based on the estimations, $p_i$ needs to determine whether an arm can be considered as optimal in $S_i$ and submit this status to the platform (Line \ref{alg:AETDA:receiveOpt}). Specifically, if there exists an arm $a_j \in S_i$ such that $\lcb_{i,j} > \max_{a_{j'}\in S_i\setminus \set{a_j}}\ucb_{i,j'}$, then $a_j$ is regarded as optimal and player $p_i$ would submit $\mathrm{opt}_i = a_j$ to the platform. Otherwise, no arm can be regarded as optimal, and $p_i$ would submit $\mathrm{opt}_i=-1$. 
For players who have learned their most preferred arm, the platform would mark their exploration status as $\false$ (Line \ref{alg:AETDA:opt:true:update}). 

To avoid conflict when players with $E_i=\true$ explore arms in a round-robin manner, we introduce a detection procedure to detect whether an arm in $S_i$ is occupied by its more preferred players (Line \ref{alg:aetda:detect:start}-\ref{alg:aetda:detect:end}).   
Specifically, if an arm $a_j$ does not accept player $p_i$ when faced with the player set who regards $a_j$ as the optimal one (Line \ref{alg:aetda:detect:start}), then $p_i$ can be regarded to be rejected by $a_j$ when exploring this arm. 
In this case, no matter whether this arm is the most preferred one, $p_i$ has no chance of being matched with it. 
So $p_i$ directly deletes $a_j$ from its available arm set $S_i$ (Line \ref{alg:AETDA:update:Si}). And if this arm is just the estimated optimal arm of $p_i$, then this case is equivalent in offline DA to that $p_i$ is rejected when proposing to its most preferred arm. 
% (Line \ref{alg:aetda:detect:E:false:start}). 
In this case, $p_i$ needs to explore to learn its next preferred arm and update $E_i$ as $\true$ (Line \ref{alg:aetda:detect:E:false}).

For the arrangement of round-robin exploration, without loss of generality, we can convert the original set of $K$ arms with total capacity $C$ into a set of $C$ new arms, each with a capacity $1$. When $N$ players explore these $C$ new arms: the platform let $p_1$ follow the ordering $1,2,...,C-1,C,1,...$; $p_2$ follow $2,3,...,C,1,2,...$; and so on.
If an arm $a_j$ is unavailable for a player $p_i$, $p_i$ simply forgo the opportunity to select in the corresponding rounds. This pre-arranged ordering ensures that, in the worst case, each player can match with each available new arm, and so as to the available original arm, at least once in every $C$ rounds.

\paragraph{Extension to the decentralized setting. }
In the decentralized setting without a central platform, each player maintains and updates their own $S_i$ and $E_i$. We can define a phase version of Algorithm \ref{alg:AETDA}. Specifically, each phase contains a number of rounds and the size of phases grows exponentially, i.e., $2,2^2,2^3,\cdots$. Within each phase, each player $p_i$ would explore arms in $S_i$ in a round-robin manner if $E_i=\true$ as discussed above and focus on arm $\mathrm{opt}_i$ otherwise. 
Players only update the status of $\mathrm{opt}_i$ (Line \ref{alg:AETDA:receiveOpt}), $E_i$ (Line \ref{alg:AETDA:opt:true:update}),  and $S_i$ (Line \ref{alg:aetda:detect:start}-\ref{alg:aetda:detect:end}) at the end of the phase based on the communication with other players and arms. 
If $L$ observations on arms are enough to learn the optimal one in the centralized version, then the stopping condition (Line \ref{alg:AETDA:receiveOpt}) would be satisfied at the end of the phase guaranteeing the number of observations in this decentralized version and the total number of selecting times would be at most $2L$ due to the exponentially increasing phase length. 
So the regret in this decentralized version is at most two times as that suffered in the centralized version. And the number of communications is at most $O(\log T)$ which is of the same order as the ETDA algorithm and also \citet{kong2023player} for the one-to-one setting.

\subsection{Theoretical Analysis}

Algorithm \ref{alg:AETDA} presents a new perspective that integrates the learning process into each step of the DA algorithm to find a player-optimal stable matching. In the following, we will show that such a design simultaneously enjoys guarantees of player-optimal stable regret and incentive compatibility.

\begin{theorem}\label{thm:aetgs:regret}
Under the responsiveness condition, when $N\le C$, the player-optimal stable regret of each player $p_i$ by following Algorithm \ref{alg:AETDA} satisfies
    \begin{align*}
        \overline{R}_i(T) \le O\bracket{ N\cdot \min\set{N, K}C \log T/\Delta_{\overline{m}}^2  } \,.
    \end{align*}
\end{theorem}

% The following theorem further discusses the incentive compatibility of Algorithm \ref{alg:AETDA}. 

\begin{theorem}{(Incentive  Compatibility)}\label{thm:aetgs:strategic}
When all of the other players follow Algorithm \ref{alg:AETDA}, no single player $p_i$ can improve its final matched arm by misreporting $\mathrm{opt}_i$ in some rounds.  
\end{theorem}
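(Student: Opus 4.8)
The plan is to argue that, from the perspective of a single player $p_i$ who contemplates a deviation, the AETDA dynamics run by the other $N-1$ truthful players plus the central platform implement exactly the offline deferred-acceptance procedure on the \emph{realized} preference estimates, and that the well-known strategyproofness of DA on the proposing side then forces $p_i$'s final arm to be no better than $\overline{m}_i$. First I would set up notation for a generic (possibly adaptive, history-dependent) deviation: $p_i$ reports a sequence of status messages $\widetilde{\mathrm{opt}}_i(t)\in\cK\cup\{-1\}$ that need not coincide with the true optimality test on $S_i$, and may depend on everything $p_i$ observes. I would fix the realization of all reward noise so that the other players' internal states $(S_{i'},E_{i'},\mathrm{opt}_{i'})$ evolve deterministically given $p_i$'s reported messages and $p_i$'s round-robin selections when $E_i=\true$. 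The key structural observation to extract is: once all truthful players have $E_{i'}=\false$ and their $\mathrm{opt}_{i'}$ have stabilized (which happens in finite time with probability one, by the same estimation-accuracy arguments used for Theorem~\ref{thm:aetgs:regret}), the detection step (Lines~\ref{alg:aetda:detect:start}--\ref{alg:aetda:detect:end}) prunes $S_{i'}$ precisely according to $\ch_j$ applied to the set of players currently pointing at $a_j$ — i.e. it simulates one ``round'' of DA with $p_i$ currently proposing to whatever arm it last reported (or to $S_i$ in round-robin if it still claims $E_i=\true$).

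The main argument has three steps. \textbf{Step 1 (truthful behavior $\Rightarrow$ DA output).} When $p_i$ also follows the algorithm, I would show by induction on the DA steps that the pruning of the $S_{i'}$'s and of $S_i$ reproduces the deferred-acceptance execution on the true preference lists, so $p_i$'s final arm is its DA-match, namely $a_{\overline{m}_i}$; this is essentially Theorem~\ref{thm:etda}/\ref{thm:aetgs:regret} restated. \textbf{Step 2 (deviation is dominated by some truthful report of a permuted list).} This is the heart of the proof and where I would lean on the classical result (Dubins--Freedman / Roth) that the player-proposing DA is group-strategyproof for the proposing side, and more specifically on the folklore ``one round at a time'' lemma: any behavior $p_i$ can exhibit in the step-by-step proposing process — staying on an arm that has not yet rejected it, jumping ahead, or falsely claiming it is still exploring — yields a final matched arm that is also achievable by \emph{some} fixed misreported preference list submitted truthfully to honest DA. The reason is that in each DA step the only thing $p_i$'s message influences, through the others' detection updates, is which single arm $p_i$ is ``currently holding''; the others' choice functions $\ch_j$ see only that arm, and arms never un-reject, so the whole interaction is equivalent to running DA with $p_i$ free to choose, at each of its own rejections, which not-yet-rejected arm to propose to next — and that degree of freedom is exactly captured by choosing a preference permutation for $p_i$ in advance. \textbf{Step 3 (conclude).} By strategyproofness of DA for the proposing side, the arm $p_i$ obtains under any such permuted list is not strictly preferred (under $p_i$'s \emph{true} $\mu_{i,\cdot}$) to $a_{\overline{m}_i}$; combining with Step~1, no deviation improves $p_i$'s final matched arm.

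I expect \textbf{Step~2} to be the main obstacle, for two reasons. First, one must be careful that $p_i$'s deviation cannot corrupt the \emph{other} players' estimates: since arms only reveal $\ch_j(\cdot)$ and the other players' UCB/LCB indices are computed from their own matched-or-not outcomes, I need to check that $p_i$ reporting a bogus $\mathrm{opt}_i=a_j$ merely changes \emph{whom $a_j$ rejects}, not the sub-Gaussian rewards anyone receives, so the ``clean event'' on which every honest player eventually identifies its true optimal-in-$S_{i'}$ arm still holds with the same probability — hence the reduction to a finite DA-like process is legitimate. Second, the claim that adaptive, history-dependent reporting buys nothing beyond a fixed permuted list requires an exchange/coupling argument: given any adaptive strategy, define the permutation by reading off, in the induced execution, the order in which $p_i$ ``commits'' to arms between consecutive rejections, and verify by induction on DA steps that feeding this permutation to honest DA produces an at-least-as-good arm. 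A subtle point is the phase of $p_i$ reporting $\mathrm{opt}_i=-1$ indefinitely: unlike in plain ETDA, here that only delays $p_i$ (it keeps round-robining over $S_i$ while the others have already converged and pruned), so in the limit $p_i$'s reachable arms are still a subset of the not-yet-rejected ones — I would note this is precisely why AETDA fixes the incentive flaw of ETDA described just before Section~\ref{sec:aetda}. The remaining estimation-probability bookkeeping is routine and identical to the proof of Theorem~\ref{thm:aetgs:regret}, so I would only sketch it.
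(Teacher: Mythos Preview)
Your proposal is correct and takes essentially the same approach as the paper: both reduce the online AETDA dynamics to offline player-proposing DA and then invoke the Dubins--Freedman/Roth strategyproofness result, with your Step~2 (reducing an adaptive, history-dependent deviation to a fixed misreported permutation) being a more careful and explicit version of the paper's brief two-case analysis (wrongly reporting an arm versus reporting $-1$). Your additional check that $p_i$'s deviation cannot corrupt the other players' reward observations or confidence intervals is a point the paper leaves implicit, but otherwise the arguments align.
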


Compared with \citet{wang2022bandit}, our result not only achieves 
an $O(N^4K\log T/(C\varepsilon^{N^4}))$ improvement over their weaker player-pessimal stable regret objective but also enjoys guarantees of incentive compatibility. 
Compared with the state-of-the-art result in one-to-one settings, our algorithm is more robust to players' deviation only with the cost of $O(NC)$ worse regret bound \citep{zhang2022matching,kong2023player}. 
To the best of our knowledge, it is the first algorithm that simultaneously achieves guarantees of polynomial player-optimal stable regret and incentive compatibility in both many-to-one markets and previously widely studied one-to-one markets without knowing the value of $\Delta$. 

Due to the space limit, the proofs of two theorems are deferred to Appendix\ref{sec:proof:aetda}.

\section{Online DA Algorithm for Substitutability }\label{sec:decen}
% Preferences

In many-to-one markets, arms may have combinatorial preferences over groups of players, which may not be well characterized by responsiveness. In this setting, we consider the markets with substitutability, which is one of the most common and general conditions that ensure the existence of a stable matching and is defined below.

\begin{definition}{(Substitutability)}\label{def:substi}
The preference of arm $a_j$ satisfy substitutability if for any player set $P\subseteq \cN$ that contains $p_i$ and $p_{i'}$, $p_i \in \ch_j(P\setminus \set{p_{i'}})$ when $p_i \in \ch_j(P)$.
\end{definition}

The above property states that arm $a_j$ keeps accepting player $p_i$ when other players become unavailable. This is the sense that $a_j$ regards players in a team as substitutes rather than complementary individuals (in which case the arm may give up accepting the player when others become unavailable). 
Such a phenomenon appears in many real applications and covers responsiveness as proved below.

\begin{remark} 
Select a player set $P\subseteq \cN$ which contains $p_{i}$ and $p_{i'}$. 
Suppose $p_i \in \ch_j(P)$, i.e., $p_i$ is one of the $C_j$ highest-ranked players in $P$.  
Then when the available set becomes $P\setminus\set{p_{i'}}$, 
$p_i$ is still one of the $C_j$ highest-ranked players, i.e., $p_i \in \ch_j(P\setminus\set{p_{i'}})$.
\end{remark}

The substitutability property is more general than responsiveness as arms' preferences can have combinatorial structures. The following is an example that satisfies substitutability but not responsiveness \cite{roth1992two}. 

\begin{example}
$\cN=\set{p_1,p_2,p_3}$ and $ \cK=\set{a_1, a_2}$. 
The arms' preference rankings over subsets of players are
\begin{itemize}
    \item $a_1: \set{p_1,p_2},\set{p_1,p_3},\set{p_2,p_3},\set{p_3},\set{p_2},\set{p_1}$.
    \item $a_2: \set{p_3},\emptyset$.
\end{itemize}
That is to say, $\ch_j(P)$ is the subset that ranks highest among all subsets listed above that only contain players in $P$. Taking the preferences of $a_2$ as an example, when $p_3 \in P$, then $\ch_j(P)=\set{p_3}$; Otherwise, $\ch_j(P)=\emptyset$. 
\end{example}

For many-to-one markets with substitutable preferences, we propose an online deferred acceptance (ODA) algorithm (presented in Algorithm \ref{alg:decen}). ODA is inspired by the idea of the DA algorithm with the arm side proposing, which finds a player-pessimal stable matching when players know their preferences. 
Specifically, the DA algorithm with the arm proposing proceeds in several steps. 
In the first step, each arm proposes to its most preferred subset among all players. 
Each player would reject all but the most preferred arm among those who propose it. 
In the following each step, each arm still proposes to its most preferred subset of players among those who have not rejected it and each player rejects all but the most preferred one among those who propose to it. This process stops when no rejection happens and the final matching is the player-pessimal stable matching \citep{kelso1982job,roth1992two}.

\begin{algorithm}[thb!]
    \caption{online deferred acceptance (from view of $p_i$)}\label{alg:decen}
    \begin{algorithmic}[1]
    \STATE Input: player set $\cN$, arm set $\cK$ \label{alg:decen:input}\\
        \STATE Initialize: $P_{i,j}=\cN,  \hat{\mu}_{i,j}=0, T_{i,j}=0$ for each $j\in [K]$; $S_i(1)=\set{a_j\in \cK: p_i \in \ch_j(P_{i,j})}$ \label{alg:decen:initial} 
        \FOR{each round $t=1,2,\cdots$}
            \STATE Select $A_i(t)\in S_i(t)$ in a round-robin way \label{alg:decen:roundrobin}
            \STATE Update $\hat{\mu}_{i,\bar{A}_i(t)}$ and $T_{i,\bar{A}_i(t)}$ if $\bar{A}_i(t) = A_i(t) \neq \emptyset$
        \label{alg:decen:updatemuT}
            \STATE $S_i(t+1)=S_i(t)$
            \FOR{$a_j \in S_{i}(t)$ and $\ucb_{i,j}(t) < \max_{a_{j'} \in S_{i}(t)} \lcb_{i,j'}(t)$}\label{alg:decen:delete:suboptimal:condition}
                	\STATE $S_i(t+1) = S_i(t+1)\setminus \set{a_j}$ \label{alg:decen:delete:suboptimal}
            \ENDFOR
                \IF{$t\ge 2$ and $\forall p_{i'}\in \cN: \bar{A}_{i'}(t)=\bar{A}_{i'}(t-1)$}\label{alg:decen:update:available:start}
                    \STATE $\forall j\in[K]$, $P_{i,j} = P_{i,j}\setminus\set{p_{i'}: \bar{A}_{i'}(t)\neq j, \exists t'<t-1 \text{ s.t. } \bar{A}_{i'}(t')=j }$\label{alg:decen:update:available}
                    \STATE $S_{i}(t+1) = \set{a_j: p_i \in \ch_j(P_{i,j}) }$ \label{alg:decen:update:plausible}
                \ENDIF \label{alg:decen:update:available:end}
    \ENDFOR
    \end{algorithmic}
\end{algorithm}

The ODA algorithm is designed with the guidance of this procedure but players decide which arm to select in each round. 
Specifically, each player $p_i$ needs to record the available player set $P_{i,j}$ for each arm $a_j$, which consists of players who have not rejected arm $a_j$ and is initialized as the full player set $\cN$. 
Then if a player $p_i$ is in the choice set of $a_j$ when the set $P_{i,j}$ of players is available, i.e., $p_{i} \in \ch_j(P_{i,j})$,  $p_i$ would be accepted if it proposes to $a_j$ together with other players in $P_{i,j}$. 
The main purpose of the algorithm is to let players wait for this opportunity to choose arms that will successfully accept them. 

% , rather than blindly proposing to arms at the beginning and dealing with the frequent conflicts.

Each player $p_i$ can further construct the plausible set $S_i$ to contain those arms that may successfully accept it, i.e., $S_i = \set{ a_j: p_i \in \ch_j(P_{i,j})}$.  
Here for simplicity, we additionally assume each player $p_i$ knows whether $p_i \in \ch_j(P)$ for the possible $P$. This assumption is only used for clean analysis and later we show that players can learn this knowledge during the algorithmic operation with an additional constant number of rounds. 
% the algorithm can also be generalized to the case where this information is unavailable by letting players in $P_{i,j}$ pull $a_j$ and observe whether it is accepted. Since arms know their own preferences and conflicts are deterministically resolved, at most $2^N$ rounds are needed to obtain this information.
Apart from $P_{i,j}$ and $S_i$, each player $p_i$ also maintains $\hat{\mu}_{i,j}$ and $T_{i,j}$ to record the estimated value for $\mu_{i,j}$ and the number of its observations. At the beginning, both values are initialized to $0$.

In each round $t$, each player $p_i$ proposes to arm $a_j$ in the plausible set $S_i(t)$ in a round-robin way (Line \ref{alg:decen:roundrobin}). If they are successfully matched with each other (Line \ref{alg:decen:updatemuT}), $p_i$ would update the corresponding $\hat{\mu}_{i,j}, T_{i,j}$ as the previous section. 
When the $\ucb$ of $a_j$ is even lower than the $\lcb$ of other plausible arms, $a_j$ is considered to be less preferred. 
In this case, the final stable arm of player $p_i$ must be more preferred than $a_j$ and thus there is no need to select $a_j$ anymore (Line \ref{alg:decen:delete:suboptimal}).

Recall that the plausible sets of players are constructed based on the available sets for arms. 
To ensure each player successfully be accepted by arms in their own plausible set, all players need to keep the available sets for arms updated in sync. 
With the awareness that players always select plausible arms in a round-robin way, once $p_i$ observes that all players focus on the same arm in the recent two rounds, it believes all players have determined the most preferred one. 
In this way, $p_i$ updates the available set $P_{i,j}$ for each arm $a_j$ by deleting players who do not consider $a_j$ as stable arms (Line \ref{alg:decen:update:available}).
Since all players have the same observations, the update times of $P_{i,j}$ would be the same. 
Such a stage in which all players determine the most preferred arm in the plausible set can just be regarded as a step of the offline DA algorithm (with the arm side proposing) where each player rejects all but the most preferred one among those who propose to it. 
Thus, the update times of $P_{i,j}$ just divide the total horizon into several stages, corresponding to a step of DA. 

We then introduce how to learn $\ch_j(P)$ for the possible player set $P$ in the algorithm. At the beginning, we introduce an initialization phase of $K$ rounds. Each of the $K$ rounds corresponds to one arm. And in the round for arm $a_j$, all players in $P_{i,j}=\cN$ select arm $a_j$ and others select nothing. Players can then learn whether $p_i\in \ch_j(P_{i,j})$ based on whether it is accepted.
% All players select each arm once and learn whether $p_i\in \ch_j(\cN)$ for each player $p_i$, arm $a_j$ based on whether the player is accepted by the corresponding arm. 
During the algorithmic operation, when all players identify their most preferred arms and update $P_{i,j}$ for all arm $a_j$ (Line \ref{alg:decen:update:available}), we also additionally introduce $K$ rounds after this round. Players perform the same operation as the initialization but with the updated $P_{i,j}$. 
% And in the round for arm $a_j$, all players in $P_{i,j}$ select arm $a_j$ and others select nothing. 
Since the algorithm proceeds in at most $NK$ stages (each player rejects each arm at most once) and each stage requires $K$ rounds to learn the arms' preference knowledge, this totally introduces additional $NK^2$ rounds, which is a constant term and does not influence the main regret order. 
 
\subsection{Theoretical Analysis}\label{sec:main:analysis}

% In this section, we provide the regret bound and discuss the incentive compatibility of Algorithm \ref{alg:decen}.
We first provide the regret guarantee for Algorithm \ref{alg:decen}.

\begin{theorem}\label{thm:decen}
Under the substitutability condition, when players know arms' exact preferences, the player-pessimal stable regret of each player $p_i$ by following Algorithm \ref{alg:decen} satisfies
    \begin{align}
        \underline{R}_i(T) \le 
        % NK \bracket{\frac{128\log T}{\Delta^2}+2+ \frac{\pi^2}{3}} \mu_{i,\underline{m}_i}  =
        O(NK\log T/\Delta_{\underline{m}}^2) \,.
    \end{align}
\end{theorem}

Due to the space limit, the proof is provided in Appendix\ref{sec:proof:decen}.  
To the best of our knowledge, Theorem \ref{thm:decen} is the first theoretical result for bandit learning in many-to-one matching markets with combinatorial substitutable preferences. 
Our algorithm not only works in more general markets but also achieves a significant improvement from $O(N^5K^2 \log^2 T/(\varepsilon^{N^4}\Delta^2))$ to $O(NK\log T/\Delta^2)$ in the recovered responsiveness setting \citep{wang2022bandit}.

% \paragraph{Known arms' preferences. }

% Note that in the recovered many-to-one markets with responsive preferences, to get the knowledge of whether $p_i\in \ch_j(P_{i,j})$ for every player $p_i$, arm $a_j$ and possible set $P_{i,j}\subseteq \cN$, only additional $N^2K$ rounds are needed by letting every two players propose to each arm to check who is more preferred, which is only a constant term.
% This knowledge is also used in previous decentralized many-to-one markets \citep{wang2022bandit}. 
% Under the exact same assumptions of knowing arms' preferences and observing their matched players as \citet{wang2022bandit}, our algorithm achieves a significant improvement .

% we provide a proof sketch of Theorem \ref{thm:decen} in Appendix \ref{sec:proof:sketch} and the detailed proof is deferred to Appendix \ref{sec:proof:full}. 

% \input{experiment}

\section{Conclusion}\label{sec:conclusion}

In this paper, we study the bandit learning problem in many-to-one markets. 
We first extend the result of \citet{kong2023player} to the many-to-one markets with responsive preferences and provide a player-optimal regret bound. Since such an algorithm lacks incentive compatibility, we further propose the AETDA algorithm which enjoys a guarantee of player-optimal regret and is simultaneously incentive compatible. 
We also consider a more general setting with substitutable preferences and provide an upper bound for player-pessimal stable regret. 
Compared with existing works for many-to-one markets \citep{wang2022bandit}, our algorithms achieve a significant improvement in terms of not only regret bound but also guarantees of incentive compatibility.

An interesting future direction is to optimize the player-optimal stable regret in the general many-to-one markets with substitutable preferences. All of the previous algorithms for the reduced settings go through based on the uniform exploration strategy. However, under substitutability, an arm may accept none of the candidates which makes it challenging for players to perform such a strategy.

% \newpage

% \section*{Acknowledgements}
% The corresponding author Shuai Li is supported by National Key Research and Development Program of China (2022ZD0114804) and National Natural Science Foundation of China (62376154, 62076161).

% \bibliographystyle{aaai24.bst}
\bibliography{ref}

\appendix
\onecolumn

\section{The ETDA Algorithm}
\label{sec:etda:appendix}

\subsection{Algorithmic Description}

Inspired by \citet{kong2023player}, we propose a more efficient explore-then-DA (ETDA) algorithm for many-to-one markets. 
Recall that each arm $a_j$ has a capacity $C_j$ under responsiveness. Denote $C_{\min} = \min_{j \in [K]} C_j$ as the minimum capacity among all arms and $j_{\min} \in \argmin_{j \in [K]} C_j$ as one arm that has the minimum capacity. 
The following algorithm runs when $N \le K\cdot C_{\min}$.

Following ETDA, each player would first estimate an index in the first $N$ rounds (Line \ref{alg:etda:index}); then explore its unknown preferences in a round-robin way based on the estimated index (Line \ref{alg:p2:start}-\ref{alg:p2:end}). After estimating a good preference ranking, it will follow DA with the player side proposing to find a player-optimal stable matching (Line \ref{alg:p3:start}-\ref{alg:p3:end}).

\begin{algorithm}[thb!]
    \caption{explore-then-DA (ETDA, from view of $p_i$)}\label{alg:etda}
    \begin{algorithmic}[1]
    \STATE Input: player set $\cN$, arm set $\cK$ \label{alg:input}
    \STATE Initialize: $\hat{\mu}_{i,j}=0, T_{i,j}=0, \forall j\in[K]; \mathrm{Flag}=\false$
    \STATE For $t\in[N]$: estimate an index $\mathrm{Index}$ \label{alg:etda:index}
    \FOR{$\ell=1,2,\ldots $}\label{alg:p2:start}
        \FOR{$t=N+2^{\ell}-1,\ldots, N+2^{\ell}-1+2^{\ell}$} \label{alg:p2:learn:start} 
            \STATE  $A_i(t)=a_{(\mathrm{Index}+t-1)\%K+1}$ \label{alg:p2:learn:select}
            \STATE Observe $X_{i,A_i(t)}(t)$ and update $\hat{\mu}_{i,A_i(t)}$, $T_{i,A_i(t)}$ if $\bar{A}_i(t) = A_i(t) $ \label{alg:p2:learn:update}
        \ENDFOR\label{alg:p2:learn:end}
        \STATE $t = N+2^{\ell}+2^{\ell}$\label{alg:monitor:round}
        \STATE Compute $\ucb_{i,j}$ and $\lcb_{i,j}$ for each $j\in[K]$ \label{alg:p2:computeUCBLCB}
        \IF{$N< K$}\label{alg:estimate:ranking:start}
            \IF{$\exists \sigma$ such that $\lcb_{i,\sigma_{k}}>\ucb_{i,\sigma_{k+1}}$ for any $k\in[N]$ and $\lcb_{i,\sigma_{N}}>\ucb_{i,\sigma_{k}}$ for any $k=N+2,\ldots, K$ } 
            \STATE $\mathrm{Flag}=\true$
            \ENDIF
        \ELSE
            \IF{$\exists \sigma$ such that $\lcb_{i,\sigma_{k}}>\ucb_{i,\sigma_{k+1}}$ for any $k\in[K-1]$} 
            \STATE $\mathrm{Flag}=\true$
            \ENDIF
        \ENDIF\label{alg:estimate:ranking:end}
        \IF{$\mathrm{Flag}=\true$}\label{alg:p2:rank:start}
        \STATE $A_i(t) = a_{\mathrm{Index}}$
        \STATE Enter DA phase with $\sigma$ if $\cup_{j\in[K]} \set{\bar{A}^{-1}_{j}(t)}= \cN$ \label{alg:p2:enterP3}
        \ELSE 
        \STATE $A_i(t) = \emptyset$
        \ENDIF \label{alg:p2:rank:end}
    \ENDFOR\label{alg:p2:end} 
    \STATE //DA phase: initialize $s=1$ \label{alg:p3:start}
    \STATE Always propose $a_{\sigma_{s}}$; update $s=s+1$ if $p_i$ is rejected \label{alg:p3:end}
    \end{algorithmic}
\end{algorithm}

At the $1$st round, all players propose to arm $a_{j_{\min}}$. And $a_{j_{\min}}$ would accept $C_{\min}$ of them. Those accepted players get an index $1$. At the following each round $t$, players who are rejected in all of the previous rounds would still propose to $a_{j_{\min}}$ and other players would propose to any other arm except for $a_{j_{\min}}$. Among those who propose to $a_{j_{\min}}$,  $C_{\min}$ of them would then be accepted and get index $t$. Following this process, all players would get an index at the end of $N$th round as $C_{\min}>0$. 
 % (arms with $0$ capacity can be removed from the market)

Since only no more than $C_{\min}$ players have the same index, players sharing the same index can be successfully accepted when they propose to any arm. Thus all players can explore arms in a round-robin way based on their indices.  
The exploration phase is broken into several epochs: the $\ell$th epoch contains an exploration block of length $2^\ell$ and a communication round. During the exploration block (Line \ref{alg:p2:learn:start}-\ref{alg:p2:learn:end}), players would propose to arms according to their indices in a round-robin way. And at the communication round, players try to estimate all players' estimation status in the market. 
For this purpose, each player needs to first determine its own estimation status. Specifically, each player $p_i$ would first compute a confidence interval for each $\mu_{i,j}$ with UCB and LCB to be the upper and lower bound. If the confidence intervals towards two arms are disjoint, the player can determine its preferences over these two arms. 
So once the player identifies the ranking of the first $\min\set{N,K}$ most preferred arms based on the estimated preferences, it can determine that its preferences are estimated well (Line \ref{alg:estimate:ranking:start}-\ref{alg:estimate:ranking:end}). 
% \fang{modify}
% If the confidence intervals of the first $\min\set{N+1,K}$-ranked arms are disjoint, the player can determine that its preference ranking towards the most preferred $\min\set{N,K}$ arms has been estimated well and establish the estimated ranking $\sigma$ based on the estimated preference values (Line \ref{alg:p2:rank:start}-\ref{alg:p2:rank:end}). 
Players can also transmit their current estimation status to others through its action: if a player estimates its preferences well, it will propose to the arm labeled by its index; otherwise, it will give up the proposing chance in this round (Line \ref{alg:p2:rank:start}-\ref{alg:p2:rank:end}). 
Recall that all players would be accepted when proposing to the arm together with other players having the same index. Thus if a player observes that all players are successfully matched in this round, it can determine all players have estimated their unknown preferences well and would enter the DA phase to find a stable matching (Line \ref{alg:p2:enterP3}).

In the DA phase, all players would act based on the procedure of the offline DA algorithm with the player side proposing \cite{roth1984stability,roth1992two}. At the first round of the DA phase, all players propose to their most preferred arm according to their estimated rankings. And each arm $a_j$ would only accept the top $C_j$ highest players among those who propose it. 
In the following each round, each player still proposes to its most preferred arm among those who have not rejected it, and each arm accepts its most preferred $C_j$ players among those who propose to it.  
Until no rejection happens, all players would not change their actions in the following rounds. 
% Since each arm can reject each player at most once, such a process would continue for at most $NK$ rounds before converging. 
According to Lemma \ref{lem:DA:steps}, if the estimated preference ranking for the most preferred $\min\set{N,K}$ arms of each player is correct, this process is equivalent to the offline DA algorithm with the player side proposing and the final matching is shown to be player-optimal \citep{roth1984stability,roth1992two}.

\subsection{Proof of Theorem \ref{thm:etda}}\label{sec:proof:etda}

Before the main proof, we first introduce some notations that will be used in the full Appendix. 
Let $T_{i,j}(t),\hat{\mu}_{i,j}(t)$ be the value of $T_{i,j},\hat{\mu}_{i,j}$ at the end of round $t$. Define the bad event 
$\cF = \set{\exists t\in[T], i\in[N],j\in[K], \abs{\hat{\mu}_{i,j}(t)-\mu_{i,j}} > \sqrt{\frac{6\log T}{ T_{i,j}(t)}} }$
to represent that some estimations are far from the real preference value at some round.

% Define the bad event 
% $\cF = \set{\exists t\in[T], i\in[N],j\in[K], \abs{\hat{\mu}_{i,j}(t)-\mu_{i,j}} > \sqrt{\frac{6\log T}{ T_{i,j}(t)}} }$
% to represent that some estimations are inaccurate at some rounds. 

The player-optimal stable regret of each player $p_i$ by following our ETDA algorithm (Algorithm \ref{alg:etda}) satisfies
\begin{align}
    \overline{R}_i(T) &=\EE{\sum_{t=1}^T \bracket{\mu_{i,\overline{m}_i} - X_{i}(t)} } \notag \\
    &\le \EE{\sum_{t=1}^T \bOne{\bar{A}(t) \neq \overline{m}} \cdot \mu_{i,\overline{m}_i}} \notag \\
    &\le N \mu_{i,\overline{m}_i}+ \EE{\sum_{t=N+1}^T \bOne{\bar{A}(t) \neq \overline{m}} \mid  \urcorner \cF }\cdot \mu_{i,\overline{m}_i} + T\PP{\cF }\cdot \mu_{i,\overline{m}_i} \notag \\
    &\le N \mu_{i,\overline{m}_i}+ \EE{\sum_{t=N+1}^T \bOne{\bar{A}(t) \neq \overline{m}}\mid \urcorner \cF }\cdot \mu_{i,\overline{m}_i}  + 2NK\mu_{i,\overline{m}_i} \label{eq:dueto:bad} \\
    &\le N \mu_{i,\overline{m}_i}+\EE{ \sum_{\ell=1}^{\ell_{\max}  }\bracket{2^{\ell} +1} +\min\set{N^2, NK} }\cdot \mu_{i,\overline{m}_i} + 2NK\mu_{i,\overline{m}_i} \label{eq:dueto:phases:and:def_lmax}\\
    &\le {N \mu_{i,\overline{m}_i}}+  {\bracket{ \frac{192K\log T}{\Delta_{\min}^2} + \log\bracket{ \frac{192K\log T}{\Delta_{\min}^2}} }\cdot \mu_{i,\overline{m}_i}} +{\min\set{N^2, NK}\mu_{i,\overline{m}_i}} + {2NK\mu_{i,\overline{m}_i}} \label{eq:end} \\
    &= O\bracket{K\log T/\Delta_{\min}^2} \notag \,,
\end{align}
where Eq.\eqref{eq:dueto:bad} comes from Lemma \ref{lem:cen:badevent}, Eq. \eqref{eq:dueto:phases:and:def_lmax} holds according to Algorithm \ref{alg:etda} and Lemma \ref{lem:phase3}, Eq. \eqref{eq:end} holds based on Lemma \ref{lem:phase2}.~\\ 
\newline

\begin{lemma}\label{lem:cen:badevent}
\begin{align*}
T\cdot \PP{\cF}  \le 2NK \,.
\end{align*}
\end{lemma}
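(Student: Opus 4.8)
The plan is to bound the probability of the bad event $\cF$ by a union bound over players, arms, and the number of distinct confidence radii, then use a standard subgaussian concentration inequality. First I would observe that for a fixed player $p_i$ and arm $a_j$, the relevant quantity is $\hat{\mu}_{i,j}(t) - \mu_{i,j}$, where $\hat{\mu}_{i,j}(t)$ is the empirical average of the (up to $t$) i.i.d.\ $1$-subgaussian samples collected on arm $a_j$. Since the deviation threshold $\sqrt{6\log T / T_{i,j}(t)}$ depends only on the \emph{number of observations} $T_{i,j}(t)$ and not on $t$ directly, I would re-index: for each possible sample count $s \in \{1,\dots,T\}$, let $\bar\mu_{i,j,s}$ denote the average of the first $s$ samples, and note that $\cF$ is contained in the event that $\abs{\bar\mu_{i,j,s} - \mu_{i,j}} > \sqrt{6\log T/s}$ for some $i,j,s$.

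Next I would apply Hoeffding's inequality for subgaussian random variables: for fixed $i,j,s$, since $\bar\mu_{i,j,s}$ is the mean of $s$ independent $1$-subgaussian variables, $\PP{\abs{\bar\mu_{i,j,s} - \mu_{i,j}} > \sqrt{6\log T/s}} \le 2\exp(-s \cdot (6\log T/s)/2) = 2\exp(-3\log T) = 2/T^3$. Then a union bound over $i \in [N]$, $j \in [K]$, and $s \in [T]$ gives $\PP{\cF} \le \sum_{i,j,s} 2/T^3 = 2NKT/T^3 = 2NK/T^2$. Multiplying by $T$ yields $T\cdot\PP{\cF} \le 2NK/T \le 2NK$, which is the claimed bound (and in fact slightly stronger).

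The main obstacle — really a technical subtlety rather than a genuine difficulty — is handling the random stopping/indexing correctly: $T_{i,j}(t)$ is a random variable determined by the algorithm's (adaptive) choices, so one cannot directly apply concentration to "the average of $T_{i,j}(t)$ samples." The standard fix, which I would invoke, is precisely the re-indexing trick above: bound the event uniformly over all \emph{deterministic} sample counts $s$, which dominates the adaptive case because whatever value $T_{i,j}(t)$ takes, it is some fixed $s \le T$, and the rewards $X_{i,j}(1), X_{i,j}(2), \dots$ on a given arm form an i.i.d.\ sequence independent of which rounds they are observed in. I would remark that this is the reason the confidence radius uses $6\log T$ rather than, say, $2\log T$: the extra slack absorbs the union bound over the $T$ possible values of $s$. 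Everything else is a routine calculation, so I would keep the write-up brief.
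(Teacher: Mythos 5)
Your proof is correct and follows essentially the same route as the paper: a union bound over players, arms, and deterministic sample counts, followed by the subgaussian concentration bound $2\exp(-3\log T)=2/T^3$ at each fixed count. The only difference is bookkeeping—by re-indexing directly over sample counts you avoid the paper's extra union over rounds $t$ and obtain the slightly stronger $T\cdot\PP{\cF}\le 2NK/T$, which of course still implies the stated bound.
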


\begin{proof}
\begin{align}
T\cdot \PP{\cF} &= T\cdot \PP{ \exists t\in[T], i\in[N],j\in[K]: \abs{\hat{\mu}_{i,j}(t)-\mu_{i,j}}>\sqrt{\frac{6\log T}{T_{i,j}(t)}}  } \notag \\
&\le T\cdot \sum_{t=1}^T \sum_{i\in[N]}\sum_{j\in[K]} \PP{ \abs{\hat{\mu}_{i,j}(t)-\mu_{i,j}}>\sqrt{\frac{6\log T}{T_{i,j}(t)}} } \notag\\
&\le T\cdot \sum_{t=1}^T \sum_{i\in[N]}\sum_{j\in[K]} \sum_{w = 1}^{t} \PP{T_{i,j}(t)=w, \abs{\hat{\mu}_{i,j}(t)-\mu_{i,j}}>\sqrt{\frac{6\log T}{T_{i,j}(t)}} } \notag\\ 
&\le T\cdot \sum_{t=1}^T \sum_{i\in[N]}\sum_{j\in[K]} t\cdot 2\exp\bracket{ -3\log T } \label{eq:upper:chernoff}\\ 
&\le 2NK\,. \notag 
\end{align}
where Eq.\eqref{eq:upper:chernoff} comes from Lemma \ref{lem:chernoff}. 
\end{proof}

\begin{lemma}\label{lem:phase3}
Conditional on $ \urcorner \cF$, 
at most $\min\set{N^2, NK}$ rounds are needed in phase 3 before $\sigma_{i,s} = \overline{m}_i$. In all of the following rounds, $s$ would not be updated and $p_i$ would always be successfully accepted by $\overline{m}_i$.
\end{lemma}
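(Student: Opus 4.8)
\textbf{Proof plan for Lemma~\ref{lem:phase3}.}
The plan is to argue, conditional on $\urcorner\cF$, that (i) once every player enters phase~3 its estimated ranking $\sigma$ is correct, (ii) with correct rankings phase~3 exactly simulates the offline player-proposing DA algorithm, and (iii) that simulation converges to the player-optimal stable matching in at most $\min\{N^2,NK\}$ rounds, after which no player is ever rejected again so $s$ freezes at the value with $\sigma_{i,s}=\overline{m}_i$.

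First I would establish correctness of the estimated ranking. The entry condition for the DA phase (Line~\ref{alg:p2:rank:start} of Algorithm~\ref{alg:etda}) is that all confidence intervals of player $p_i$ are disjoint, i.e.\ there exists a permutation $\sigma$ with $\lcb_{i,\sigma_k}>\ucb_{i,\sigma_{k+1}}$ for all $k$. On the event $\urcorner\cF$, every $\mu_{i,j}$ lies inside its confidence interval $[\lcb_{i,j},\ucb_{i,j}]$, so disjointness of the intervals forces the ordering of the true means $\mu_{i,\sigma_1}>\mu_{i,\sigma_2}>\cdots$; hence $\sigma$ is exactly $p_i$'s true preference order, and in particular $\sigma_{i,1}=\overline m_i$ is meaningful only after all rejections are resolved. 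I would also note that a player enters phase~3 only when it observes (via Line~\ref{alg:p2:enterP3}, the ``all players matched'' signal) that \emph{every} player has a disjoint-interval ranking, so all players enter phase~3 simultaneously with correct rankings.

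Next I would identify phase~3 with offline DA. In phase~3 each player proposes to $a_{\sigma_s}$ and increments $s$ upon rejection, while each arm $a_j$ (under responsiveness) keeps its top $C_j$ proposers and rejects the rest — this is precisely one round of the player-proposing deferred-acceptance procedure under the true preferences. A standard DA argument gives convergence: each arm can reject each player at most once, and a rejection is the only event that changes any player's action, so the number of rounds with at least one rejection is at most the number of (player, arm) rejection pairs, which is at most $NK$; alternatively, since each player is rejected at most $K$ times but also at most $N-1$ other players can ever displace it, one gets the $N^2$ bound, yielding $\min\{N^2,NK\}$. Once a round passes with no rejection, the DA correctness theorem (cited in the excerpt: \citealp{roth1984stability,roth1992two}) says the current matching is the player-optimal stable matching, so $\bar A_i = a_{\overline m_i}$, i.e.\ $\sigma_{i,s}=\overline m_i$; thereafter no rejection occurs, $s$ is never updated again, and $p_i$ is accepted by $\overline m_i$ in every subsequent round, which is the claim.

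The main obstacle is the bookkeeping linking the \emph{learning} side to the \emph{matching} side: I must be careful that the ``enter DA with $\sigma$'' event at Line~\ref{alg:p2:enterP3} is reached by all players in the same round (so that the DA simulation starts from the genuine first step rather than a corrupted intermediate state), and that the round count of $\min\{N^2,NK\}$ is measured from the start of phase~3, not from round~1. This is a matter of carefully reading off the algorithm's communication protocol rather than a deep difficulty, but it is where an incorrect argument would most plausibly slip in. Everything else is the classical DA convergence count plus the elementary observation that disjoint confidence intervals containing the true means pin down the true order.
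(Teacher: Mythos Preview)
Your proposal is correct and follows essentially the same approach as the paper's proof: the paper invokes Lemma~\ref{lem:cen:ucblcb} to get correctness of each $\sigma_i$ on $\urcorner\cF$, Lemma~\ref{lem:phase2} for simultaneous entry into phase~3, and Lemma~\ref{lem:DA:steps} for the $\min\{N^2,NK\}$ convergence bound of offline player-proposing DA, which is exactly the three-step skeleton you outlined (with the supporting lemmas sketched inline rather than cited). Your final ``obstacle'' paragraph about synchronized entry is precisely the content of Lemma~\ref{lem:phase2}, so there is nothing missing.
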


\begin{proof}
According to Lemma \ref{lem:cen:ucblcb} and Algorithm \ref{alg:etda}, when player $p_i$ enters the DA phase with $\sigma_i$, 
it holds that the first $\min\set{N,K}$ arms in $\sigma_i$ are the first $\min\set{N,K}$ arms in the real preference ranking of player $p_i$. 
Further, according to Lemma \ref{lem:phase2}, all players enter in the DA phase simultaneously. 
Above all, the procedure of the DA phase is equivalent to the procedure of the offline DA algorithm with the player proposing \citep{roth1984stability} as well as the players' real preference rankings  (Lemma \ref{lem:DA:steps}). 
Thus at most $\min\set{N^2, NK}$ rounds are needed before each player $p_i$ successfully finds the optimal stable arm $\overline{m}_i$. 
Once the optimal stable matching is reached, no rejection happens anymore and $s$ will not be updated. Thus each player $p_i$ would always be accepted by $\overline{m}_{i}$ in the following rounds. 
\end{proof}

\begin{lemma}\label{lem:phase2}
Conditional on $\urcorner \cF$, phase 2 will proceed in at most $\ell_{\max}$ epochs where 
\begin{align}
    \ell_{\max} = \min\set{\ell: \sum_{\ell'=1}^{\ell} 2^{\ell'} \ge 96K\log T/\Delta_{\min}^2}\,, \label{eq:def:ellmax}
\end{align}
which implies that $\sum_{\ell'=1}^{\ell_{\max}} 2^{\ell'} \le 192K\log T/\Delta_{\min}^2$ and $\ell_{\max} = \log \bracket{\log \bracket{192K\log T/\Delta_{\min}^2} }$  since the epoch length grows exponentially. 
And all players will enter in the DA phase simultaneously at the end of the $\ell_{\max}$-th epoch. 
\end{lemma}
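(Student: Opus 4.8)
The plan is to combine a \emph{sample-complexity} estimate --- that on the good event $\urcorner\cF$ a bounded number of observations per arm lets every player recover its true preference ranking --- with a \emph{scheduling} estimate --- that the index-based round-robin of phase~2 delivers these observations by the end of epoch $\ell_{\max}$. The two closed-form consequences about $\sum_{\ell'}2^{\ell'}$ and $\ell_{\max}$ then drop out of the definition \eqref{eq:def:ellmax} by elementary geometric-series algebra.

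First I would pin down when a player's confidence intervals separate. Fix $p_i$ and arms $a_j\neq a_{j'}$ with $\mu_{i,j}>\mu_{i,j'}$; then $\mu_{i,j}-\mu_{i,j'}\ge\Delta$ by Definition~\ref{def:gap}. On $\urcorner\cF$ we have $\hat\mu_{i,j}\ge\mu_{i,j}-\sqrt{6\log T/T_{i,j}}$ and $\hat\mu_{i,j'}\le\mu_{i,j'}+\sqrt{6\log T/T_{i,j'}}$, so $\lcb_{i,j}-\ucb_{i,j'}\ge\Delta-2\sqrt{6\log T/T_{i,j}}-2\sqrt{6\log T/T_{i,j'}}$, which is nonnegative once $T_{i,j},T_{i,j'}\ge 96\log T/\Delta^2$. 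Hence if $T_{i,j}\ge 96\log T/\Delta^2$ for every $j\in[K]$, all of $p_i$'s confidence intervals are pairwise disjoint and ordered by the true preferences; this is exactly the condition tested in Line~\ref{alg:p2:rank:start}, and by Lemma~\ref{lem:cen:ucblcb} the ranking $\sigma$ the test produces is the true one. So the ``$\Rightarrow$'' direction (enough samples force the test to fire) comes from the displayed inequality, and the ``$\Leftarrow$'' direction (a firing test yields the correct ranking) from Lemma~\ref{lem:cen:ucblcb}.

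Next I would count observations. Because $N\le K\cdot C_{\min}$ and at most $C_{\min}$ players carry any fixed index, in every exploration round every player is accepted by the arm dictated by its round-robin order (Line~\ref{alg:p2:learn:select}) and so collects one fresh observation of that arm; and since that order cycles through all $K$ arms, after $m$ exploration rounds each player has observed each arm at least $\lfloor m/K\rfloor$ times (up to a negligible $O(\ell_{\max})$ correction for the interleaved communication rounds). The exploration blocks of epochs $1,\dots,\ell$ contain $\sum_{\ell'=1}^{\ell}2^{\ell'}$ rounds, so by the definition of $\ell_{\max}$ the per-arm count reaches at least $96\log T/\Delta^2$ for every player no later than the end of epoch $\ell_{\max}$. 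Combined with the previous paragraph, every player passes its ranking test at the communication round of epoch $\ell_{\max}$ at the latest. Since all players run the identical schedule and see the same matching outcomes, in that same communication round each ``done'' player proposes to its index arm and --- by the index structure and $N\le K\cdot C_{\min}$ --- is accepted, so every player observes $\cup_{j}\bar{A}^{-1}_j(t)=\cN$ and all enter the DA phase simultaneously via Line~\ref{alg:p2:enterP3}; in particular phase~2 lasts at most $\ell_{\max}$ epochs.

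The two closed-form consequences are then routine: minimality of $\ell_{\max}$ gives $\sum_{\ell'=1}^{\ell_{\max}-1}2^{\ell'}<96K\log T/\Delta^2$, and since $\sum_{\ell'=1}^{\ell_{\max}}2^{\ell'}=2\bracket{\sum_{\ell'=1}^{\ell_{\max}-1}2^{\ell'}}+2=2^{\ell_{\max}+1}-2$, this yields $\sum_{\ell'=1}^{\ell_{\max}}2^{\ell'}\le 192K\log T/\Delta^2$ up to an absolute constant, and taking logarithms in $2^{\ell_{\max}+1}-2=\sum_{\ell'=1}^{\ell_{\max}}2^{\ell'}$ gives the stated estimate for $\ell_{\max}$. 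I expect the only genuine obstacle to be the scheduling step: one must argue carefully that the fixed, pre-agreed index ordering together with $N\le K\cdot C_{\min}$ forces a \emph{successful} match --- hence a usable observation --- for every player in every exploration round, so that the per-arm observation counts grow deterministically and synchronously across all players; the sample-complexity bound and the geometric-series algebra are standard.
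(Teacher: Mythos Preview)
Your proposal is correct and follows essentially the same approach as the paper: establish that the index-based round-robin with $N\le K\cdot C_{\min}$ guarantees each player a successful observation of every arm in each cycle, deduce that $T_{i,j}\ge 96\log T/\Delta^2$ by the end of epoch $\ell_{\max}$, and conclude via the confidence-interval separation argument (the paper cites Lemma~\ref{lem:cen:pulltime} for this, whereas you rederive it inline). Your treatment of the geometric-series consequences and the communication-round synchronization is slightly more explicit than the paper's, but the structure is identical.
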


\begin{proof}
Since players propose to arms based on their distinct indices in a round-robin way and $C_j \ge C_{\min},\forall j\in[K]$, all players can be successfully accepted at each round during the exploration rounds. Thus at the end of the epoch $\ell_{\max}$ defined in Eq. \eqref{eq:def:ellmax}, it holds that $T_{i,j}\ge 96\log T/\Delta_{\min}^2$ for any $i\in[N],j\in[K]$.  
 
According to Lemma \ref{lem:cen:pulltime}, when $T_{i,j}\ge 96\log T/\Delta_{\min}^2$ for any arm $a_j$, player $p_i$ finds a permutation $\sigma_i$ over arms such that $\lcb_{i,\sigma_{i,k}}>\ucb_{i,\sigma_{i,k+1}}$ for any $k\in[\min\set{N,K-1}]$ and $\lcb_{i,\sigma_{i,N}}>\ucb_{i,\sigma_{i,k}}$ for any $k=N+2,\ldots,K$ if $N<K$.

Thus, at the communication round of epoch $\ell_{\max}$, each player $p_i$ would propose to the arm with its distinct index. And each player can then observe that $\abs{ \cup_{i'\in[N]} \set{\bar{A}_{i'}(t)} }= N$. 
Based on this observation, all players would enter in the DA phase simultaneously at the end of the $\ell_{\max}$-th epoch. 
\end{proof}

\begin{lemma}\label{lem:DA:steps}

The offline DA algorithm stops in at most $\min\set{N^2, NK}$ steps. 
And the player-optimal stable arm of each player is the first $\min\set{N,K}$-ranked in its preference list. 
\end{lemma}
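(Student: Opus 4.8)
The plan is to prove the two assertions in reverse order, since the round bound follows quickly once the rank statement is in hand. Throughout I work with the player-proposing offline DA run on the true (known) preferences, and I will use the classical facts that this procedure terminates at the player-optimal stable matching and that no player is ever rejected by an arm it is matched to in \emph{some} stable matching; both hold under responsiveness \citep{roth1984stability,roth1992two}. A preliminary observation I would record first is that, since $N\le K\cdot C_{\min}\le C=\sum_{j}C_j$, every player is matched in every stable matching: if $p_i$ were unmatched in a stable matching $m$, then as $\mu_{i,j}>0$ for all $j$, stability would force $p_i\notin\ch_j\bracket{m^{-1}(a_j)\cup\{p_i\}}$ for every arm $a_j$; under responsiveness (and because a stable matching never over-fills an arm) this means $|m^{-1}(a_j)|=C_j$ for each $j$, so exactly $C$ players are matched --- but $p_i$ is not among them, giving $C\le N-1$, contradicting $N\le C$. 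In particular $\overline{m}_i$ is well defined and is the arm matched to $p_i$ when DA halts.

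For the rank claim, fix a player $p_i$ and let $B$ be the set of arms that $p_i$ strictly prefers to $\overline{m}_i$; it suffices to show $|B|\le N-1$, for then $\overline{m}_i$ sits at position $|B|+1\le N$ in $p_i$'s list, and it is of course also at a position $\le K$. Let $m^*$ be the player-optimal stable matching, so $m^*(p_i)=\overline{m}_i$. For each $a_j\in B$, the pair $(p_i,a_j)$ does not block $m^*$, and since $p_i$ prefers $a_j$ to $\overline{m}_i$ this forces $p_i\notin\ch_j\bracket{(m^*)^{-1}(a_j)\cup\{p_i\}}$. Under responsiveness $\ch_j(S)$ consists of the $\min\{|S|,C_j\}$ highest-ranked players of $S$, so this is possible only if $(m^*)^{-1}(a_j)$ already contains at least $C_j\ge 1$ players ranked above $p_i$ by $a_j$; in particular $(m^*)^{-1}(a_j)\ne\emptyset$, and $p_i\notin(m^*)^{-1}(a_j)$ because $p_i$ is matched to $\overline{m}_i\notin B$. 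Since $m^*$ is a matching, the sets $\{(m^*)^{-1}(a_j)\}_{a_j\in B}$ are pairwise disjoint and all contained in $\cN\setminus\{p_i\}$, so $|B|\le\sum_{a_j\in B}|(m^*)^{-1}(a_j)|\le N-1$, which proves the second assertion.

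For the round count, recall that in player-proposing DA each player proposes down its preference list, rejections are permanent, and $p_i$ is never rejected by $\overline{m}_i$ yet is matched to it at termination; hence over the whole run $p_i$ proposes exactly to its top $\mathrm{rank}(\overline{m}_i)$ arms and is rejected by at most $\mathrm{rank}(\overline{m}_i)-1\le\min\{N,K\}-1$ of them. Summing over the $N$ players, the total number of rejections during the run is at most $N(\min\{N,K\}-1)$. The algorithm stops in the first round with no rejection, and every earlier round causes at least one rejection, so the number of rounds is at most the total number of rejections plus one, i.e. at most $N\min\{N,K\}-N+1\le N\min\{N,K\}=\min\{N^2,NK\}$.

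I expect the structural rank argument to be the crux: the point is that each arm $p_i$ could envy must, by stability plus responsiveness, be fully occupied by players other than $p_i$, and these disjoint player-sets number at most $N-1$. The remaining steps are bookkeeping plus the standard guarantees of player-proposing DA (player-optimality of the output and ``no rejection by a stable partner''), which are already cited in the paper and which I will invoke rather than reprove.
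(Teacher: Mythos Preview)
Your proof is correct but takes a genuinely different route from the paper's. The paper argues dynamically: once $N$ distinct arms have received a proposal, each holds at least one tentative partner and these are disjoint, so all $N$ players are matched and DA halts; hence at most $\min\{N,K\}$ arms ever receive proposals, and since each player proposes down its own list only to arms in this set, both the rank claim and the rejection count follow. You instead argue statically, from stability of the output $m^*$: every arm $p_i$ strictly prefers to $\overline{m}_i$ must, by non-blocking and responsiveness, be filled to capacity in $m^*$ by players other than $p_i$; these disjoint subsets of $\cN\setminus\{p_i\}$ number at most $N-1$, giving the rank claim, and the step bound follows by summing per-player rejections. Your approach is cleaner and makes the role of $N\le C$ explicit (you need every player matched so that $\overline{m}_i$ is an actual arm and each player's last proposal is accepted), whereas the paper's argument stays entirely within DA dynamics and does not need to invoke player-optimality of the output. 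One minor remark: you do not actually need the ``no rejection by a stable partner'' lemma for the step count---that $p_i$ is never rejected by $\overline{m}_i$ follows directly from the DA mechanics (rejections are permanent and $p_i$ is matched to $\overline{m}_i$ at termination), so that citation is stronger than necessary.
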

\begin{proof}
According to the offline DA algorithm procedure, once an arm has been proposed by players, this arm has a temporary partner.
Above all, once $N$ arms have been proposed, they will occupy $N$ players and the algorithm stops. So before the algorithm stops, at most $N-1$ arms have been previously proposed. 
Since players propose to arms one by one according to their preference list, a player can only be rejected by an arm at most once. Thus $N-1$ arms can reject at most $N$ players. The worst case is that one rejection happens at one step, resulting in the $N^2$ total time complexity. 
And since there are at most $K$ arms, the DA algorithm would stop in $\min\set{N^2, NK}$ steps. 

And since only $\min\set{N,K}$ arms have been proposed at the end, the final matched arm of each player must belong to the first $\min\set{N,K}$-ranked in its preference list. 

\end{proof}

\begin{lemma}\label{lem:cen:ucblcb}
Conditional on $\urcorner \cF$, $\ucb_{i,j}(t)<\lcb_{i,j'}(t)$ implies $\mu_{i,j}<\mu_{i,j'}$ for any time $t$. 
\end{lemma}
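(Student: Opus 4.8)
The plan is to unwind the definitions of $\ucb$ and $\lcb$ and chain two instances of the good-event bound. First I would record the two one-sided consequences of being on $\urcorner\cF$: for every $t$, every player $p_i$, and every arm $a_j$ with $T_{i,j}(t)\ge 1$,
\[
\mu_{i,j} \le \hat{\mu}_{i,j}(t) + \sqrt{6\log T/T_{i,j}(t)} = \ucb_{i,j}(t)
\quad\text{and}\quad
\mu_{i,j} \ge \hat{\mu}_{i,j}(t) - \sqrt{6\log T/T_{i,j}(t)} = \lcb_{i,j}(t),
\]
both of which are immediate from the definition $\cF = \{\exists t,i,j:\ |\hat{\mu}_{i,j}(t)-\mu_{i,j}| > \sqrt{6\log T/T_{i,j}(t)}\}$ together with the fact that the radius used in $\ucb_{i,j}$ and $\lcb_{i,j}$ matches exactly. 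The point is that $\urcorner\cF$ is a purely deterministic event once the rewards are fixed, so these inequalities hold simultaneously for all $(t,i,j)$.

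Then, assuming $\ucb_{i,j}(t) < \lcb_{i,j'}(t)$, I would simply string the bounds together:
\[
\mu_{i,j} \le \ucb_{i,j}(t) < \lcb_{i,j'}(t) \le \mu_{i,j'},
\]
which yields $\mu_{i,j} < \mu_{i,j'}$, as claimed. The only degenerate case to dispatch is $T_{i,j}(t)=0$ or $T_{i,j'}(t)=0$: in the former $\ucb_{i,j}(t) = +\infty$ and in the latter $\lcb_{i,j'}(t) = -\infty$, so the hypothesis $\ucb_{i,j}(t) < \lcb_{i,j'}(t)$ cannot hold and there is nothing to prove; hence whenever the hypothesis is satisfied we automatically have $T_{i,j}(t)\ge 1$ and $T_{i,j'}(t)\ge 1$, and the displayed chain is literally valid.

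I do not expect any genuine obstacle here — the statement is a one-line consequence of the confidence-interval construction, and essentially all of the content has already been packaged into the definition of $\cF$ with its matching deviation radius $\sqrt{6\log T/T_{i,j}(t)}$. The only thing I would be careful to state explicitly is the $T_{i,j}(t)=0$ convention ($\ucb=\infty$, $\lcb=-\infty$), so that the argument covers all rounds uniformly without a separate case analysis for un-pulled arms.
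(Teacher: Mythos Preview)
Your proof is correct and follows exactly the same approach as the paper: on $\urcorner\cF$ one has $\lcb_{i,j}(t)\le \mu_{i,j}\le \ucb_{i,j}(t)$, so the hypothesis immediately chains to $\mu_{i,j}\le \ucb_{i,j}(t)<\lcb_{i,j'}(t)\le \mu_{i,j'}$. Your added remark on the $T_{i,j}(t)=0$ convention is a harmless extra bit of care that the paper leaves implicit.
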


\begin{proof}
Conditional on $\urcorner \cF$, for each $i\in[N],j\in[K]$, we have
\begin{align*}
 \lcb_{i,j}(t) =\hat{\mu}_{i,j}(t)-\sqrt{\frac{6\log T}{ T_{i,j}(t)}}  \le \mu_{i,j} \le \hat{\mu}_{i,j}(t)+\sqrt{\frac{6\log T}{ T_{i,j}(t)}} = \ucb_{i,j}(t)\,.
\end{align*}
Thus if $\ucb_{i,j}(t)<\lcb_{i,j'}(t)$, there would be 
\begin{align*}
  \mu_{i,j} \le  \ucb_{i,j}(t) <\lcb_{i,j'}(t) \le \mu_{i,j'}\,.
\end{align*}
\end{proof}

% \fang{proof modify}
\begin{lemma}\label{lem:cen:pulltime}
% Let $T_i(t) = \min\set{T_{i,j}(t): j \in S_i(t)}$, $\bar{T}_i = \frac{96\log T}{\Delta^2}$. Conditional on $\urcorner \cF$, if $T_i(t) > \bar{T}_i$, we have $\ucb_{i,j}(t)< \lcb_{i,j'}(t)$ for any $j,j'\in S_{i}(t)$ with $\mu_{i,j}<\mu_{i,j'}$.
Consider the player $p_i$ and two arms $a_j$ and $a_{j'}$ with $\mu_{i,j}<\mu_{i,j'}$. 
Conditional on $\urcorner \cF$, if $\min\set{T_{i,j}(t), T_{i,j'}(t) } > \frac{96\log T}{\Delta_{i,j,j'}^2}$, we have $\ucb_{i,j}(t)< \lcb_{i,j'}(t)$.  
\end{lemma} 

\begin{proof}
By contradiction, suppose $\ucb_{i,j}(t)\ge \lcb_{i,j'}(t)$. Conditional on $\urcorner \cF$, we have 
\begin{align*}
 \mu_{i,j'}- 2\sqrt{\frac{6\log T}{{T}_{i}(t)}} \le \lcb_{i,j'}(t) \le \ucb_{i,j}(t) \le \mu_{i,j}+2\sqrt{\frac{6\log T}{{T}_{i}(t)}} \,.
\end{align*}
We can then conclude $\Delta_{i,j,j'} \le 4 \sqrt{\frac{6\log T}{\min\set{T_{i,j}(t),T_{i,j'}(t) }}}$ and thus $\min\set{T_{i,j}(t),T_{i,j'}(t) } \le \frac{96 \log T}{\Delta_{i,j,j'}^2}$, which contradicts the fact that $\min\set{T_{i,j}(t),T_{i,j'}(t) } > \frac{96 \log T}{\Delta_{i,j,j'}^2}$. 
\end{proof}

\section{Analysis of The AETDA Algorithm (Algorithm \ref{alg:AETDA})}
\label{sec:proof:aetda}

\subsection{Proof of Theorem \ref{thm:aetgs:regret}}

    The player-optimal stable regret of each player $p_i$ by following our AETDA algorithm (Algorithm \ref{alg:AETDA}) satisfies
% \begin{align}
%     \overline{R}_i(T) &=\EE{\sum_{t=1}^T \bracket{\mu_{i,\overline{m}_i} - X_{i}(t)} } \notag \\
%     &\le \EE{\sum_{t=1}^T \bracket{\mu_{i,\overline{m}_i} - X_{i}(t)} \mid 
%     \urcorner \cF } + T \cdot \PP{\cF } \cdot \mu_{i,\overline{m}_i}\notag \\
%     &\le \EE{\sum_{t=1}^T \sum_{a_j\in \cK} \bOne{\bar{A}_i(t)=a_j} \Delta_{i,\overline{m}_i,j} \mid \urcorner \cF } + \EE{\sum_{t=1}^T \bOne{\bar{A}_i(t)=\emptyset} \mu_{i,\overline{m}_i} \mid \urcorner \cF } + 2NK \mu_{i,\overline{m}_i} \notag \\
%     &\le \frac{192\min\set{N^2, NK} C\log T}{\Delta^2}\cdot \mu_{i,\overline{m}_i}  + 2NK \mu_{i,\overline{m}_i}  \label{eq:aetda:regret}  \\
%     &= O\bracket{N\min\set{N,K}C\log T/\Delta^2} \notag \,,
% \end{align}
\begin{align}
    \overline{R}_i(T) &=\EE{\sum_{t=1}^T \bracket{\mu_{i,\overline{m}_i} - X_{i}(t)} } \notag \\
    &\le \EE{\sum_{t=1}^T \bracket{\mu_{i,\overline{m}_i} - X_{i}(t)} \mid 
    \urcorner \cF } + T \cdot \PP{\cF } \cdot \mu_{i,\overline{m}_i}\notag \\
    &\le \EE{\sum_{t=1}^T \bOne{\opt_i \neq -1}\bracket{\mu_{i,\overline{m}_i} - X_{i}(t)} \mid 
    \urcorner \cF } + \EE{\sum_{t=1}^T \bOne{\opt_i = -1}\bracket{\mu_{i,\overline{m}_i} - X_{i}(t)} \mid 
    \urcorner \cF } + T \cdot \PP{\cF } \cdot \mu_{i,\overline{m}_i}\notag \\
    % &\le \EE{\sum_{t=1}^T \sum_{a_j\in \cK} \bOne{\bar{A}_i(t)=a_j} \Delta_{i,\overline{m}_i,j} \mid \urcorner \cF } + \EE{\sum_{t=1}^T \bOne{\bar{A}_i(t)=\emptyset} \mu_{i,\overline{m}_i} \mid \urcorner \cF } + 2NK \mu_{i,\overline{m}_i} \notag \\
    &\le \frac{192\min\set{N^2, NK} C\log T}{\Delta_{\overline{m}}^2}\cdot \mu_{i,\overline{m}_i}  + 2NK \mu_{i,\overline{m}_i}  \label{eq:aetda:regret}  \\
    &= O\bracket{N\min\set{N,K}C\log T/\Delta_{\overline{m}}^2} \notag \,,
\end{align}
where Eq. \eqref{eq:aetda:regret} comes from Lemma \ref{lem:aetda:collision} and \ref{lem:aetda:sub-optimal}.

\begin{lemma}\label{lem:aetda:collision}
    Following the AETDA algorithm, conditional on $\urcorner \cF$,  the regret of each player $p_i$ suffered when focusing on arms satisfies that 
    \begin{align*}
        % \EE{\sum_{t=1}^T \bOne{\bar{A}_i(t)=\emptyset} \mu_{i,\overline{m}_i} \mid \urcorner \cF } 
        \EE{\sum_{t=1}^T \bOne{\opt_i \neq -1}\bracket{\mu_{i,\overline{m}_i} - X_{i}(t)} \mid 
    \urcorner \cF } \le  \frac{96\min\set{N^2, NK} C\log T}{\Delta_{\overline{m}}^2}\cdot \mu_{i,\overline{m}_i} \,.
    \end{align*}
\end{lemma}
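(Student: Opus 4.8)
The plan is to bound, conditional on the good event $\urcorner\cF$, the total number of rounds in which some player $p_i$ has $\opt_i\neq-1$ (i.e.\ is in an exploiting/focusing state) yet is not matched with $\overline{m}_i$. The key structural fact is that the sequence of $\opt$-reports across players drives a simulated run of the offline DA algorithm: each time a player has a definite $\opt_i=a_j$ and is rejected in the detection step (Lines~\ref{alg:aetda:detect:start}--\ref{alg:aetda:detect:end}), $a_j$ is removed from $S_i$, which mirrors one rejection step of DA with the player side proposing. First I would argue, using Lemma~\ref{lem:cen:ucblcb} (so that $\opt_i=a_j$ implies $a_j$ is genuinely $p_i$'s most preferred arm in $S_i$) together with Lemma~\ref{lem:DA:steps}, that along the run there can be at most $\min\set{N^2,NK}$ such ``rejection events'' before the simulated DA stabilizes at the player-optimal stable matching — after which every player with $\opt_i\neq-1$ is permanently matched with $\overline{m}_i$ and contributes zero further regret.

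Next I would quantify the cost \emph{between} consecutive rejection events. The danger is that a player sits in a focusing state on some arm $a_j\neq\overline{m}_i$ for a long stretch while waiting for that arm to reject it. But in the algorithm, once $p_i$ reports $\opt_i=a_j$ the detection step in the \emph{same} round checks whether $p_i\in\ch_j(\set{p_{i'}:\opt_{i'}=a_j}\cup\set{p_i})$; if a more-preferred set of players is already focusing on $a_j$, the rejection is immediate. So the only rounds a focusing player $p_i$ can waste are those in which the set of players currently reporting $\opt\cdot=a_j$ has not yet ``filled up'' — i.e.\ those players are themselves still exploring earlier arms in their own lists. The total such delay across the whole horizon is therefore governed by the total exploration budget. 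Concretely, by the round-robin arrangement over the $C$ capacity-slots, each player observes each available arm at least once every $C$ rounds, so by Lemma~\ref{lem:cen:pulltime} a player needs at most $C\cdot\bar T_i = 96C\log T/\Delta^2$ rounds (per DA step) to promote an arm to $\opt$ status, and summing over at most $\min\set{N^2,NK}$ DA steps gives the claimed $96\min\set{N^2,NK}C\log T/\Delta^2$ bound on the number of ``bad'' rounds; multiplying by the per-round regret $\mu_{i,\overline{m}_i}$ finishes it.

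Putting these together: conditional on $\urcorner\cF$, the rounds in which $p_i$ has $\opt_i\neq-1$ but $X_i(t)<\mu_{i,\overline{m}_i}$ are all contained in the pre-stabilization portion of the simulated DA, whose total length across all steps is at most $96\min\set{N^2,NK}C\log T/\Delta^2$; outside this portion $p_i$ is matched with $\overline{m}_i$ and incurs expected instantaneous regret $0$ conditional on $\urcorner\cF$. Hence the stated inequality.

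The main obstacle I anticipate is the careful bookkeeping that the simulated DA is \emph{well-defined and monotone} despite the interleaving of exploration: different players enter focusing states at different (data-dependent) times, a player that gets rejected must re-enter exploration ($E_i=\true$ again, Line~\ref{alg:aetda:detect:E:false}) and only later re-report a new $\opt_i$, and I must make sure this never ``undoes'' a DA step or creates more than $\min\set{N^2,NK}$ rejections total. The cleanest route is probably to define a potential/progress measure — e.g.\ the multiset of pairs $(p_i, a_j)$ with $a_j$ removed from $S_i$ — show it is monotone increasing and bounded by $\min\set{N^2,NK}$ via Lemma~\ref{lem:DA:steps}, and then separately bound the waiting time within each ``level'' of this potential by the $C\log T/\Delta^2$ exploration estimate; correctly handling the fact that waiting for one arm to reject $p_i$ can overlap in time with other players' exploration on other arms (so the budgets don't simply add but are bounded stepwise) is where I'd be most careful.
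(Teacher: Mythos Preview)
Your proposal is correct and follows essentially the same route as the paper: decompose the horizon into at most $\min\{N^2,NK\}$ simulated DA steps (via Lemma~\ref{lem:DA:steps} and Lemma~\ref{lem:cen:ucblcb}), bound each step's length by $96C\log T/\Delta^2$ using the round-robin guarantee and Lemma~\ref{lem:cen:pulltime}, observe that when $\opt_i\neq-1$ the focused arm is at least as good as $\overline{m}_i$ so regret arises only from collisions, and multiply. The bookkeeping concern you flag (that the simulated DA is well-defined and monotone despite interleaved exploration) is exactly the subtlety the paper's proof handles only implicitly, so your proposed potential-function treatment would in fact make the argument more rigorous than the original.
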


\begin{proof}
Recall that conditional on $\urcorner \cF$ and Lemma \ref{lem:cen:ucblcb}, the AETDA algorithm is an online adaptive version of the offline DA algorithm and it will reach the player-optimal stable matching. 
Once $p_i$ focuses on an arm ($\opt_i\neq -1$), this arm must have a higher ranking than the player-optimal stable one.
So the regret in this part only happens when $p_i$ collides with others at arm $\opt_i$. 

Lemma \ref{lem:DA:steps} shows that the offline DA algorithm proceeds in at most $\min\set{N^2, NK}$ steps. Denote $t_s$ as the round index of the start of step $s$ in our AETDA. Then the regret caused when focusing on arms can be decomposed into these steps as Eq. \eqref{eq:aetda:explore:steps}. 
The total regret in this part satisfies
\begin{align}
        & \EE{\sum_{t=1}^T \bOne{\opt_i \neq -1}\bracket{\mu_{i,\overline{m}_i} - X_{i}(t)} \mid 
    \urcorner \cF } \notag \\ \le& \EE{\sum_{s=1}^{\min\set{N^2, NK}} \sum_{t=t_{s}}^{t_{s+1}-1} \bOne{\opt_i \neq -1, \bar{A}_i(t)=\emptyset} \mu_{i,\overline{m}_i} \mid \urcorner \cF } \label{eq:aetda:explore:steps} \\
        \le & \sum_{s=1}^{\min\set{N^2, NK}} \frac{96C\log T}{\Delta_{\overline{m}}^2}\cdot \mu_{i,\overline{m}_i} \label{eq:aetgs:explore:steps:times} \\
        \le& \frac{96\min\set{N^2, NK} C\log T}{\Delta_{\overline{m}}^2}\cdot \mu_{i,\overline{m}_i} \notag \,. 
    \end{align}
In each step, the regret occurs when $p_i$ focuses on the arm $\opt_i$ and other players round-robin explore this arm who is preferred more by $\opt_i$. Based on Lemma \ref{lem:cen:pulltime}, an arm is explored for at most $96\log T/\Delta_{\overline{m}}^2$ times by another player $p_{i'}$ before $p_{i'}$ identifies its currently most preferred arm (which has a higher ranking than the player-optimal stable arm of $p_{i'}$).  
And when $N$ players explore $K$ arms, at most $C$ rounds are required to ensure each player can be matched with each arm once. 
That is why Eq. \eqref{eq:aetgs:explore:steps:times} holds. 
\end{proof}

\begin{lemma}\label{lem:aetda:sub-optimal}
    Following the AETDA algorithm, the regret of each player $p_i$ caused by exploring sub-optimal arms satisfies that 
    \begin{align*}
        % \EE{\sum_{t=1}^T \sum_{a_j\in \cK} \bOne{\bar{A}_i(t)=a_j} \Delta_{i,\overline{m}_i,j} \mid \urcorner \cF } \le \frac{96\min\set{N,K} C\log T}{\Delta^2}\cdot  \mu_{i,\overline{m}_i} \,.\\
        \EE{\sum_{t=1}^T \bOne{\opt_i = -1}\bracket{\mu_{i,\overline{m}_i} - X_{i}(t)} \mid 
    \urcorner \cF } \le \frac{96\min\set{N,K} C\log T}{\Delta_{\overline{m}}^2}\cdot  \mu_{i,\overline{m}_i} \,.\\
    \end{align*}
\end{lemma}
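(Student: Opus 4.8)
The plan is to partition the rounds with $\opt_i=-1$ into the successive ``exploration episodes'' of player $p_i$ --- the maximal time intervals during which $E_i=\true$ and no arm of $S_i$ has yet been certified optimal --- and to bound separately the number of episodes and the number of $\opt_i=-1$ rounds inside each one. Taking expectations, a round with $\opt_i=-1$ contributes at most $\mu_{i,\overline{m}_i}$ to the regret, since on a matched round the expected reward is nonnegative and on an unmatched round it is $0$; hence it suffices to show that, conditional on $\urcorner\cF$, the number of rounds with $\opt_i=-1$ is at most $96\min\set{N,K}C\log T/\Delta^2$.

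\emph{Counting the episodes.} Conditional on $\urcorner\cF$, AETDA emulates the offline player-proposing DA for $p_i$ --- the fact already invoked in the proof of Lemma~\ref{lem:aetda:collision}: the status $E_i$ is reset from $\false$ to $\true$ precisely when the arm $\opt_i$ on which $p_i$ was focusing is deleted from $S_i$ (Lines~\ref{alg:aetda:detect:E:false:start}--\ref{alg:aetda:detect:E:false}), which corresponds to a rejection of $p_i$ in DA. Since $p_i$ never proposes again to an arm already removed from $S_i$, and by Lemma~\ref{lem:DA:steps} its player-optimal stable arm $\overline{m}_i$ lies within the first $\min\set{N,K}$ entries of $p_i$'s preference order, $p_i$ is rejected at most $\min\set{N,K}-1$ times; together with the initial exploration block this gives at most $\min\set{N,K}$ episodes.

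\emph{Bounding one episode.} Inside an episode set $T_i(t)=\min_{a_j\in S_i(t)}T_{i,j}(t)$ as in Lemma~\ref{lem:cen:pulltime} and let $a_{j^\star}$ be the most preferred arm currently in $S_i(t)$. If $T_i(t)>\bar{T}_i:=96\log T/\Delta^2$, then Lemma~\ref{lem:cen:pulltime} (valid under $\urcorner\cF$) gives $\ucb_{i,j}(t)<\lcb_{i,j^\star}(t)$ for every $a_j\in S_i(t)\setminus\set{a_{j^\star}}$, hence $\lcb_{i,j^\star}(t)>\max_{a_j\in S_i(t)\setminus\set{a_{j^\star}}}\ucb_{i,j}(t)$, which is exactly the certification condition, so $\opt_i\neq-1$; this remains true under any shrinking of $S_i$ during the episode, since deleting arms only removes terms from the minimum defining $T_i(t)$ and from the maximum inside the condition. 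It remains to observe that $T_i(t)$ increases by at least one every $C$ rounds while $\opt_i=-1$: the detection step (Lines~\ref{alg:aetda:detect:start}--\ref{alg:aetda:detect:end}) keeps in $S_i$ only arms that would accept $p_i$ given the players currently committed to them, while the pre-arranged cyclic schedule over the $C$ unit-capacity copies of the arms guarantees $p_i$ is matched with each such arm at least once in every window of $C$ consecutive rounds; therefore each $T_{i,j}$ with $a_j\in S_i$, and in particular their minimum, rises by at least one per $C$ rounds. Consequently an episode contains at most $C\bar{T}_i=96C\log T/\Delta^2$ rounds with $\opt_i=-1$.

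Multiplying the two bounds gives at most $\min\set{N,K}\cdot 96C\log T/\Delta^2$ rounds with $\opt_i=-1$, and multiplying once more by the per-round bound $\mu_{i,\overline{m}_i}$ yields the claim. I expect the final step of the third paragraph to be the main obstacle: one must argue carefully that inside an exploration episode the minimum observation count genuinely advances at the promised rate, which rests on the interplay between the detection rule (keeping $S_i$ ``feasible'' for $p_i$) and the worst-case guarantee of the pre-arranged round-robin ordering. The episode count, by contrast, is a direct consequence of Lemma~\ref{lem:DA:steps} once the correspondence with offline DA is in place.
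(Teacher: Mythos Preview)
Your proposal is correct and follows essentially the same approach as the paper: decompose the $\opt_i=-1$ rounds into at most $\min\set{N,K}$ exploration episodes via Lemma~\ref{lem:DA:steps}, bound each episode by $96C\log T/\Delta^2$ using Lemma~\ref{lem:cen:pulltime} together with the guarantee that the round-robin schedule over the $C$ unit-capacity copies yields one successful match per available arm every $C$ rounds, and multiply by the per-round cap $\mu_{i,\overline{m}_i}$. Your write-up is in fact more detailed than the paper's --- in particular your remark that shrinking $S_i$ can only help the certification condition and your explicit identification of the detection rule as what keeps every remaining arm feasible for $p_i$ are exactly the points the paper leaves implicit when it writes ``based on Lemma~\ref{lem:cen:pulltime} and the fact that each player can match each arm once in at most $C$ rounds.''
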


\begin{proof}
Recall that $\opt_i=-1$ means that player $p_i$ explores to find its most preferred available arm. 
 % So the regret only occurs when $p_i$ explores arms in a round-robin manner. 
According to Lemma \ref{lem:DA:steps}, the player-optimal stable arm must be the first $\min\set{N,K}$ ranked, denote $t_{s,s}$ and $t_{s,e}$ as the start and end round index when $p_i$ explores to find the $s$-ranked arm, then the regret can be decomposed as Eq. \eqref{eq:aetgs:dueto:GS:explore}. The total regret caused by exploring sub-optimal arms satisfies that
    \begin{align}
        &\EE{\sum_{t=1}^T \bOne{\opt_i = -1}\bracket{\mu_{i,\overline{m}_i} - X_{i}(t)} \mid 
    \urcorner \cF } \notag \\
        \le &  \EE{ \sum_{s=1}^{\min\set{N,K}} \sum_{t=t_{s,s}}^{t_{s,e}}  \bracket{\mu_{i,\overline{m}_i} - X_{i}(t)} \mid \urcorner \cF } \label{eq:aetgs:dueto:GS:explore} \\
        \le& \sum_{s=1}^{\min\set{N,K}} \frac{96C\log T}{\Delta_{\overline{m}}^2} \cdot \mu_{i,\overline{m}_i} \label{eq:aetgs:dueto:GS:explore:times} \\
        \le& \frac{96\min\set{N,K} C\log T}{\Delta_{\overline{m}}^2}\cdot  \mu_{i,\overline{m}_i} \notag \,,
    \end{align}
    where Eq. \eqref{eq:aetgs:dueto:GS:explore:times} holds based on Lemma \ref{lem:cen:pulltime} and the fact that each player can match each arm once in at most $C$ rounds during round-robin exploration.
\end{proof}

\subsection{Proof of Theorem \ref{thm:aetgs:strategic}}

    For the offline DA algorithm, it has been shown that when all of the other players submit their true rankings, no single player can improve its final matched partner by misreporting its preference ranking \cite{roth1982economics,dubins1981machiavelli}. 

Recall that our algorithm is an adaptive online version of the GS algorithm and $\mathrm{opt}_i$ represents the estimated most preferred arm of player $p_i$ in the currently available arm set $S_i$.
    There are mainly two cases of misreporting. One is that $p_i$ wrongly reports an arm as its estimated optimal one which actually is not. And the other case is that $p_i$ has learned the optimal arm but reports $\mathrm{opt}_i$ as $-1$. 
     According to the property of the DA algorithm, no matter whether $p_i$ has estimated well its current most preferred arm, reporting a wrong one would finally result in a less-preferred arm. 
    And on the other hand, if $p_i$ has already estimated well its most preferred arm, misreporting $\mathrm{opt}_i=-1$ would keep it in the round-robin exploration process. 
    According to the property of GS, no matter whether all players enter the algorithm simultaneously, their final matched arm is always the player-optimal one. 
    So misreporting $\mathrm{opt}_i=-1$ is equivalent to the player delaying entry into the offline DA algorithm and the final matching would not change.

%!TEX root =  main.tex

\section{Proof of Theorem \ref{thm:decen} }\label{sec:proof:decen}

We first provide a proof sketch of Theorem \ref{thm:decen} and the detailed proof is presented later.

\paragraph{Proof Sketch}\label{sec:proof:sketch}
We first show that, with high probability, the real preference value $\mu_{i,j}$ can be upper bounded by $\ucb_{i,j}(t)$ and lower bounded by $\lcb_{i,j}(t)$ in each round $t$. In the following, we would analyze the algorithm based on this high-probability event. 

At a high level, Algorithm \ref{alg:decen} can be regarded as an online version of DA with the arm side proposing which returns the player-pessimal stable matching. 
Specifically, at each step of the DA algorithm with the arm side proposing, each arm $a_j$ proposes to the player set $\ch_j(P_{i,j})$, which is equivalent in our algorithm to each player $p_i$ proposing arms in the plausible set constructed as $S_i(t)=\set{a_j\in \cK: p_i \in \ch_j(P_{i,j})}$.
Then each player would reject all but the most preferred arm among those who propose to it, equivalent in our algorithm to players deleting all arms in the plausible set but the one with the highest preference value. 
But since players do not know their own preferences in our setting, they need to explore these arms to learn the corresponding preference values. 
Based on the above high-probability event and the construction of the two confidence bounds, if $\mu_{i,j}<\mu_{i,j'}$ for player $p_i$ and arms $a_j,a_{j'}$ in its plausible set, these two arms would be selected by $p_i$ for at most $O(\log T/\Delta^2_{i,j,j'})$ times before $\ucb_{i,j}<\lcb_{i,j'}$ and further arm $a_j$ is considered to be less preferred than other plausible arms. 
We can regard this event as $p_i$ rejects arm $a_j$ in DA. 
When all players determine the most-preferred arm from the plausible set, the corresponding DA can proceed to the next step and arms then propose the preferred subset of players among those who have not rejected them. 
In the offline DA algorithm, the rejection can happen for at most $NK$ times since each player can reject each arm at most once. 
And all arms that are rejected by each player $p_i$ in the DA algorithm with arm side proposing are less preferred than the player-pessimal stable arm of $p_i$. 
Correspondingly, the regret of our algorithm is at most $O(NK\log T/\Delta_{\underline{m}}^2)$ before reaching stability.

\subsubsection{Full Proof}\label{sec:proof:full}

In this section, we provide the detailed proof of Theorem \ref{thm:decen}.

Let $P_{i,j}(t)$ be the value of $P_{i,j}$ at the end of round $t$.
Recall $\bar{A}(t)=\set{(p_i,\bar{A}_i(t)):p_i\in \cN}$ is the matching at round $t$ and $M^*$ is the set of all stable matchings. Further, denote $A(t)=\set{(p_i,A_i(t)):p_i\in \cN}$ as the set of players and their selected arms at round $t$.  
The player-pessimal stable regret of $p_i$ can then be bounded by 
\begin{align}
    \underline{R}_i(T) &\le  \EE{\sum_{t=1}^T \bOne{\bar{A}(t) \notin M^* }}\cdot \mu_{i,\underline{m}_i} \notag\\
    &= \EE{\sum_{t=1}^T \bOne{{A}(t)  \notin M^* }}\cdot \mu_{i,\underline{m}_i} \label{eq:decen:nocollision} \\
    &\le \EE{\sum_{t=1}^T \bOne{{A}(t) \notin M^*  } \mid \urcorner \cF }\cdot \mu_{i,\underline{m}_i} + T\cdot \PP{ \cF}\cdot \mu_{i,\underline{m}_i} \notag \\
    &\le \bracket{\frac{192NK\log T}{\Delta_{\underline{m}}^2} + 2NK} \mu_{i,\underline{m}_i} + 2NK\mu_{i,\underline{m}_i} \label{eq:decen:main and badevent}  \\
    &= O\bracket{NK\log T/\Delta_{\underline{m}}^2} \notag \,,
\end{align}

where Eq.\eqref{eq:decen:nocollision} holds according to Lemma \ref{lem:decen:nocollision}, Eq.\eqref{eq:decen:main and badevent} comes from Lemma \ref{lem:cen:badevent} and Lemma \ref{lem:decen:mainevent}.

\begin{lemma}\label{lem:decen:nocollision}
In Algorithm \ref{alg:decen}, at each round $t$, $\bar{A}_i(t) = A_i(t)$ for each player $p_i$. 
\end{lemma}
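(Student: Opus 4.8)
The plan is to show that in Algorithm \ref{alg:decen} every proposal is successful, i.e.\ $\bar A_i(t)=A_i(t)$ for all $p_i$ at every round $t$, by an induction on $t$ that tracks the invariant ``all players maintain identical available sets $P_{i,j}$, and each player only ever proposes to arms $a_j$ with $p_i\in\ch_j(P_{i,j})$.''

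\textbf{Key steps.} First I would establish the invariant that $P_{i,j}(t)=P_{i',j}(t)$ for all players $p_i,p_{i'}$ and all arms $a_j$ at every round $t$: this follows because the update in Line~\ref{alg:decen:update:available} is triggered by the condition in Line~\ref{alg:decen:update:available:start}, which depends only on the \emph{observed} matched arms $\bar A_{i'}(\cdot)$ of all players over previous rounds --- a quantity every player observes identically (by the feedback assumption in Section~\ref{sec:setting} that each player observes $\bar A_j^{-1}(t)$ for every arm) --- and on nothing player-specific. Hence the plausible sets $S_i(t)=\set{a_j: p_i\in\ch_j(P_{i,j}(t))}$ are all computed from a common family $\set{P_{\cdot,j}(t)}_j$. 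Second, I would argue the \emph{consistency} property: if every player $p_i$ currently proposes some $A_i(t)\in S_i(t)$, and $S_i(t)=\set{a_j: p_i\in\ch_j(P_j(t))}$ with the $P_j(t)$ common to all players, then at arm $a_j$ the set $A_j^{-1}(t)$ of proposers is a subset of $P_j(t)$, and moreover \emph{every} proposer $p_i\in A_j^{-1}(t)$ satisfies $p_i\in\ch_j(P_j(t))$. By substitutability (Definition~\ref{def:substi}), $p_i\in\ch_j(P_j(t))$ and $A_j^{-1}(t)\subseteq P_j(t)$ imply $p_i\in\ch_j(A_j^{-1}(t))$: repeatedly removing the players in $P_j(t)\setminus A_j^{-1}(t)$ one at a time from $P_j(t)$ keeps $p_i$ in the choice set at every step, so $p_i$ survives down to $\ch_j(A_j^{-1}(t))$. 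Therefore $a_j$ accepts all of its proposers, i.e.\ $\bar A_i(t)=A_i(t)$ for each $p_i$ proposing to $a_j$. Third, I would check the base case $t=1$: at initialization (Line~\ref{alg:decen:initial}) $P_{i,j}=\cN$ for all $i,j$, so the sets agree, and $S_i(1)=\set{a_j: p_i\in\ch_j(\cN)}$, so the invariant's second clause holds trivially; thus the consistency argument applies at $t=1$. Finally I would close the induction: given the invariant holds at round $t$, the deletions in Line~\ref{alg:decen:delete:suboptimal} only shrink $S_i(t{+}1)$ within the already-valid family, and the refresh in Line~\ref{alg:decen:update:plausible} re-sets $S_i(t{+}1)=\set{a_j: p_i\in\ch_j(P_{i,j}(t{+}1))}$ with the (still common) updated $P_{i,j}(t{+}1)$ --- and critically one must verify that after shrinking $P_{i,j}$ via Line~\ref{alg:decen:update:available}, a player removed from some $P_{i,j}$ is never still being asked to propose to $a_j$; this holds because a player only ``rejects'' $a_j$ (is removed from $P_{i,j}$) when it has committed to a strictly more preferred arm, so it would never subsequently propose to $a_j$ anyway. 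This preserves the invariant at $t+1$ and completes the induction.

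\textbf{Main obstacle.} The delicate point is making precise that the two confidence-bound-driven deletions (Line~\ref{alg:decen:delete:suboptimal}) and the synchronized available-set updates (Lines~\ref{alg:decen:update:available:start}--\ref{alg:decen:update:available:end}) together never cause a player to be proposing to an arm $a_j$ for which $p_i\notin\ch_j(P_{i,j}(t))$ --- i.e.\ that no player ``jumps ahead'' of an update, and that all players' updates stay in lockstep even though they are driven by noisy empirical means. Since every deletion/update is a deterministic function of commonly-observed matched-arm histories (not of private reward realizations), lockstep is maintained; but one has to phrase the induction hypothesis carefully enough to carry both the ``common $P_{i,j}$'' clause and the ``$A_i(t)\in S_i(t)$ with $p_i\in\ch_j$'' clause simultaneously, since they are mutually dependent. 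Everything else is routine once the invariant is stated correctly; the substitutability-to-collision-free reduction is the conceptual heart but is a short argument given Definition~\ref{def:substi}.
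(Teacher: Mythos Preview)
Your proposal is correct and follows essentially the same approach as the paper: establish that all players maintain identical $P_{i,j}$ (the paper extracts this as a separate Lemma~\ref{lem:decen:sameP}), observe that every proposer $p_i$ to arm $a_j$ satisfies $p_i\in\ch_j(P_j)$ by construction of $S_i$, and then apply substitutability to conclude $p_i\in\ch_j(A_j^{-1}(t))$. Your explicit induction is more detailed than the paper's argument, and the ``delicate point'' you flag is less subtle than you suggest---since $P_{i,j}$ changes only in Line~\ref{alg:decen:update:available} and $S_i$ is immediately re-set in Line~\ref{alg:decen:update:plausible}, while between refreshes $S_i$ only shrinks (Line~\ref{alg:decen:delete:suboptimal}) and $P_{i,j}$ stays fixed, the invariant $a_j\in S_i(t)\Rightarrow p_i\in\ch_j(P_{i,j})$ is automatic---but nothing you say is wrong.
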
 
\begin{proof}
The case where $A_i(t) = \emptyset$ holds trivially. In the following, we mainly consider the case where $A_i(t) \neq \emptyset$. 

According to Lemma \ref{lem:decen:sameP}, all players have the same $P_{i,j}(t)$ at each time $t$ for each arm $a_j$. For simplicity, we then set $P_j(t) = P_{i,j}(t)$ for any arm $a_j$ and $p_i \in \cN$. In Algorithm \ref{alg:decen}, when player $p_i$ proposes to $A_i(t) =a_j \in S_i(t)$, we have $p_i \in \ch_j(P_j(t-1))$. 
Thus it holds that $A^{-1}_j(t)\subseteq \ch_j(P_j(t-1))$. 
According to the substitutability, for each player $p_i$ who proposes to $a_j$, $p_i \in \ch_j(P_j(t-1)\cap A^{-1}_j(t)) = \ch_j(A^{-1}_j(t))$. According to the acceptance protocol of the arm side, each $p_i \in A^{-1}_j(t)$ can be successfully accepted and $\bar{A}_i(t)=A_i(t)=a_j$ holds. 
\end{proof}

\begin{lemma}\label{lem:decen:mainevent}
In Algorithm \ref{alg:decen}, for each player $p_i$, 
\begin{align*}
\EE{\sum_{t=1}^T \bOne{{A}_i(t) \notin M^*}\mid \urcorner \cF } \le \frac{192NK\log T}{\Delta_{\underline{m}}^2} + 2NK \,.
\end{align*}
\end{lemma}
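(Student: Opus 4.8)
The plan is to work throughout under the good event $\urcorner\cF$ and to show that Algorithm~\ref{alg:decen} is a faithful online simulation of the offline deferred-acceptance procedure with the arm side proposing, so that the regret reduces to counting the rounds before the run converges to a stable matching. Since Lemma~\ref{lem:decen:nocollision} guarantees $\bar A_i(t)=A_i(t)$ for all $p_i$ and $t$ (every proposal is accepted), the indicator $\bOne{A(t)\notin M^*}$ is nonzero exactly on the rounds strictly before convergence, so it suffices to bound the total number of such rounds; the complementary bad-event mass is absorbed separately through $T\cdot\PP{\cF}\le 2NK$ from Lemma~\ref{lem:cen:badevent}, as in the decomposition preceding this lemma.

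First I would set up the correspondence with offline DA. By Lemma~\ref{lem:decen:sameP} all players maintain identical available sets $P_{i,j}(t)=P_j(t)$, hence identical plausible sets, so the run stays synchronized and the updates in Lines~\ref{alg:decen:update:available}--\ref{alg:decen:update:plausible} fire for all players at the same rounds. I would argue that, under $\urcorner\cF$, such an update fires exactly when every player has collapsed its plausible set to a single arm (a player still round-robining over two or more arms cannot repeat its choice in consecutive rounds), and that one update implements one step of arm-proposing DA: each arm ``proposes'' to $\ch_j(P_j)$ through $S_i=\{a_j:p_i\in\ch_j(P_{i,j})\}$, and each player ``rejects'' all proposers but its most preferred one, which is exactly the removal of a player $p_{i'}$ from $P_j$ once $p_{i'}$ settles on an arm other than $a_j$ that it had previously tried. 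Once $p_{i'}$ leaves $P_j$ it never returns (the $P_j$'s only shrink), so each arm is rejected by each player at most once; hence there are at most $NK$ such stages, the arm-proposing analog of Lemma~\ref{lem:DA:steps}.

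Next I would bound the exploration cost inside a stage. Within a stage every player cycles in a round-robin manner over its current plausible set; by Lemma~\ref{lem:cen:ucblcb} the sub-optimality test in Line~\ref{alg:decen:delete:suboptimal} never discards the true best plausible arm, and by Lemma~\ref{lem:cen:pulltime} once every plausible arm of $p_i$ has been sampled more than $\bar T_i=96\log T/\Delta^2$ times, all remaining sub-optimal plausible arms are deleted and $S_i$ becomes a singleton. The crucial point is that the counts $T_{i,j}$ persist across stages, so an arm that re-enters a plausible set via Line~\ref{alg:decen:update:plausible} already carries its earlier count: if it was previously beaten it is deleted again within one round with no fresh sampling, and more generally, at the start of a stage every already-well-explored arm is resolved in one round, leaving $p_i$ to round-robin only over the few arms that have just entered its plausible set together with its current front-runner. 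Consequently a stage lasts at most about $\bar T_i$ times the number of arms that freshly enter some player's plausible set at that stage, plus an $O(1)$ detection delay; summing over the at most $NK$ stages and using that across the whole run each of the $K$ arms freshly enters each player's plausible set at most once (so the per-player total of such entries is at most $K$, and the overall total at most $NK$) yields $\tfrac{192NK\log T}{\Delta^2}+2NK$ once the constants are tracked, and combining with the bad-event term gives Theorem~\ref{thm:decen}.

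The main obstacle is the online-to-offline correspondence rather than the concentration arithmetic. One must verify that the detection rule in Line~\ref{alg:decen:update:available} fires neither prematurely nor late, so that players never desynchronize, skip a DA step, or collapse a plausible set onto a wrong arm; and one must control the interaction between the within-stage sub-optimality deletions (Line~\ref{alg:decen:delete:suboptimal}) and the between-stage recomputation of the plausible set (Line~\ref{alg:decen:update:plausible}): using substitutability (Definition~\ref{def:substi}) together with the monotone shrinking of the $P_j$'s, re-entries of an arm into $S_i$ must be shown to cost no further exploration, and the permanent removal of an arm from $S_i$ must be shown to coincide with $p_i$ being dropped from $P_j$. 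It is exactly this carry-over argument that keeps the bound at $O(NK\log T/\Delta^2)$ instead of an extra factor of $K$.
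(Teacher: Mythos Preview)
Your plan is essentially the paper's own argument: divide the run into stages that correspond to steps of arm-proposing DA, bound each stage's length via Lemma~\ref{lem:cen:pulltime}, and sum using the fact that the total number of (player, arm) rejections over the whole DA run is at most $NK$. Your ``fresh-entry'' count for player $p_i$ at a stage coincides with the number of rejections $|R_{i,\ell}|$ that $p_i$ makes at that stage, so the two accountings are identical.

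Where your sketch diverges from the paper is in the carry-over/re-entry discussion, and this part is both unnecessary and slightly off. First, re-entry of a previously deleted arm into $S_i$ simply cannot occur: once $p_i$ removes $a_j$ via Line~\ref{alg:decen:delete:suboptimal}, $p_i$ was matched with $a_j$ at some earlier round of the stage (otherwise $T_{i,j}=0$ and $\ucb_{i,j}=\infty$), so at the stage update Line~\ref{alg:decen:update:available} removes $p_i$ from $P_j$ permanently, and substitutability then guarantees $a_j$ never re-enters $S_i$ through Line~\ref{alg:decen:update:plausible}. So there is nothing to prove about re-entries costing no fresh sampling. Second, your per-stage bound ``$\bar T_i$ times the number of fresh entries'' is not quite right: the carried-over front-runner still sits in the round-robin cycle until a better fresh arm has accumulated $\bar T$ samples, so the stage actually takes $\max_i|S_{i,\ell}|\cdot\bar T + 2 = \max_i(|R_{i,\ell}|+1)\cdot\bar T + 2$ rounds. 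The paper handles this cleanly by writing $|S_{i,\ell}|=|R_{i,\ell}|+1\le 2\max_i|R_{i,\ell}|$ (using $\max_i|R_{i,\ell}|\ge 1$ before DA halts) and then summing $\sum_\ell\sum_i|R_{i,\ell}|\le NK$, which is exactly where the factor $192=2\cdot 96$ and the additive $2NK$ come from.
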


\begin{proof}
Recall that our Algorithm \ref{alg:decen} can be regarded as an online version of DA algorithm. 
At step $\ell$ of DA, define $S_{i,\ell}$ as the set of arms who propose player $p_i$ and $R_{i,\ell}$ as the set of arms rejected by $p_i$. It is straightforward that $|S_{i,\ell}|=|R_{i,\ell}|+1$ since each player only accepts one arm among those who propose to it and rejects others. 
Since DA stops when no rejection happens, we have $\max_{i\in[N]}|R_{i,\ell}|\ge 1$ for each step $\ell$ before DA stops.

\begin{figure}[tbh!]
\centering
\hspace{-0.6cm}
    \begin{tikzpicture}[scale=0.8, every node/.style={node distance = 0.9cm}]

    \draw [->](5,0)--(22.5,0);
    \coordinate [label=round $t$](round) at(22.5,-0.8);

    \draw (5,1)--(5,-1);
    
    \draw [->](8.5,0.5)--(9,0.5);
    \draw [->](5.5,0.5)--(5,0.5);
    \coordinate [label=step $1$ of DA](s1) at(7,0.2);
    \coordinate [label=length $L_1$](s1R) at(7,-1.2);
    
    \draw (9,1)--(9,-1);
    
    \draw [->](10,0.5)--(9,0.5);
    \draw [->](14,0.5)--(15,0.5);
    \coordinate [label=step $2$ of DA](s2) at(12,0.2);
    \coordinate [label=length $L_2$](s2R) at(12,-1.2);
    
    \draw (15,1)--(15,-1);
    
    \coordinate [label=$\cdots$](cdots) at(15.5,0.2);
    \draw (16,1)--(16,-1);

    \draw [->](17,0.5)--(16,0.5);
    \draw [->](20,0.5)--(21,0.5);
    \coordinate [label=step $\tau$ of DA](stau) at(18.5,0.2);
    \coordinate [label=length $L_{\tau}$](stauR) at(18.5,-1.2);

    \draw (21,1)--(21,-1.2);
    
    \coordinate [label=reach stability](end) at(21,-2);

    \end{tikzpicture}

    \caption{A demonstration for the total horizon of Algorithm \ref{alg:decen}. The length $L_{\ell}$ of each step $\ell$ is $\max_{i\in[N]} 96{|S_{i,\ell}|\log T}/{\Delta_{\underline{m}}^2}+2$, where $S_{i,\ell}$ denotes the set of arms who propose player $p_i$ at step $\ell$ following the offline DA algorithm.  
    }
    \label{fig:illustration}
\end{figure}
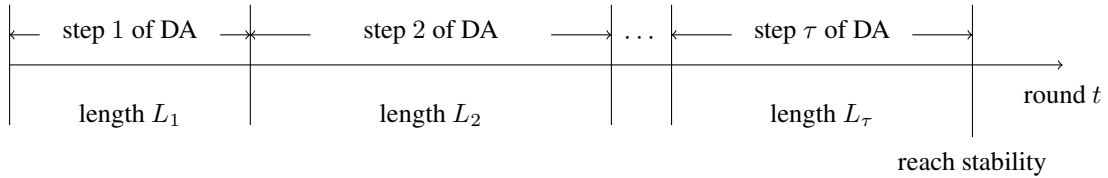

The total horizon $T$ in Algorithm \ref{alg:decen} can then be divided into several steps according to the DA algorithm. At each step $\ell$, each player $p_i$ attempts to pull the arm in $S_{i,\ell}$ in a round-robin way until it identifies the most preferred one. 
According to Lemma \ref{lem:cen:ucblcb}, once an arm is deleted from the plausible set, then it is truly less preferred. 
Further, based on Lemma \ref{lem:cen:pulltime} and the fact that all deleted arms are less preferred than the pessimal stable arm of each player, each step $\ell$ would last for at most $\max_{i\in[N]}96|S_{i,\ell}|\log T/\Delta_{\underline{m}}^2+2$ rounds, where the $2$ rounds are the time it takes for all players to detect the end of a step.   
Figure \ref{fig:illustration} gives an illustration for the total horizon of Algorithm \ref{alg:decen}. Formally, the regret can be decomposed as

\begin{align}
    \EE{\sum_{t=1}^T \bOne{{A}_i(t) \notin M^*}\mid \urcorner \cF } 
    &\le \sum_{\ell=1}^\tau \bracket{\max_{i\in[N]}|S_{i,\ell}|\cdot \frac{96\log T}{\Delta_{\underline{m}}^2} +2 }\label{eq:decen:duetofigure}\\
    &= \sum_{\ell=1}^\tau \bracket{\max_{i\in[N]}(|R_{i,\ell}|+1)\cdot \frac{96\log T}{\Delta_{\underline{m}}^2}+2} \notag \\
    &\le 2\sum_{\ell=1}^\tau \max_{i\in[N]}|R_{i,\ell}|\cdot \frac{96\log T}{\Delta_{\underline{m}}^2} +2NK \label{eq:decen:duetoRmorethan1}\\
    &\le 2\sum_{\ell=1}^\tau \sum_{i\in[N]}|R_{i,\ell}|\cdot \frac{96\log T}{\Delta_{\underline{m}}^2} +2NK \notag \\
    &\le \frac{192NK\log T}{\Delta_{\underline{m}}^2} +2NK  \label{eq:decen:duetoRejectionAtMostNK}\,,
\end{align}
where Eq.\eqref{eq:decen:duetofigure} holds according to Lemma \ref{lem:cen:pulltime} and Figure \ref{fig:illustration}, Eq.\eqref{eq:decen:duetoRmorethan1} holds since $\max_{i}|R_{i,\ell}|\ge 1$ before the offline DA stops and $\tau\le NK$ as at each step at least one rejection happens (thus DA lasts for at most $NK$ steps before finding the stable matching), Eq.\eqref{eq:decen:duetoRejectionAtMostNK} holds since the number of all rejections is at most $NK$.

\end{proof}

\begin{lemma}\label{lem:decen:sameP}
In Algorithm \ref{alg:decen}, for any arm $a_j\in \cK$ and round $t$, $P_{i,j}(t) = P_{i',j}(t)$ for any different players $p_i,p_{i'}$. 
\end{lemma}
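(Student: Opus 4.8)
The plan is to prove Lemma \ref{lem:decen:sameP} by induction on the round $t$, showing that all players maintain identical available sets $P_{i,j}$ for every arm. The base case is $t=1$ (and indeed $t=0$): by the initialization in Line \ref{alg:decen:initial}, every player sets $P_{i,j}=\cN$ for all $j$, so the claim holds trivially before any updates occur.

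For the inductive step, suppose $P_{i,j}(t-1)=P_{i',j}(t-1)$ for all arms $a_j$ and all players $p_i,p_{i'}$. I would first observe that since all players share the same $P_{i,j}(t-1)$, they construct the same plausible sets $S_i(t)$ only in the sense of the formula $\set{a_j: p_i\in\ch_j(P_{i,j}(t-1))}$; the sets themselves are player-specific, but the key point is that each player $p_i$ computes $S_i(t)$ using the \emph{common} available sets. By Lemma \ref{lem:decen:nocollision}, $\bar{A}_{i'}(t)=A_{i'}(t)$ for every player, and each player observes the full matched profile $\bar{A}^{-1}_j(t)=\set{p_{i'}:\bar{A}_{i'}(t)=a_j}$ for every arm $a_j$ (as guaranteed by the observation model in Section \ref{sec:setting}). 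Hence every player sees exactly the same realized matching $\bar{A}(t)$ and $\bar{A}(t-1)$. The update to $P_{i,j}$ happens only in Lines \ref{alg:decen:update:available:start}--\ref{alg:decen:update:available:end}, and the triggering condition ``$t\ge 2$ and $\forall p_{i'}\in\cN:\bar{A}_{i'}(t)=\bar{A}_{i'}(t-1)$'' depends only on the commonly-observed matchings, so either all players execute the update simultaneously or none do. When the update fires, the set removed, $\set{p_{i'}:\bar{A}_{i'}(t)\neq j,\ \exists t'<t-1 \text{ s.t. }\bar{A}_{i'}(t')=j}$, is determined entirely by the shared history of observed matchings $\set{\bar{A}(t')}_{t'\le t}$; since by induction all players agree on this history, the new $P_{i,j}(t)$ is the same for all players. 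This closes the induction.

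The main obstacle — really the only subtle point — is making sure that the observation model genuinely gives every player access to the full matched profile for every arm at every round, so that the quantity removed in Line \ref{alg:decen:update:available} is a deterministic function of commonly-held information; this is exactly the assumption ``each player can observe the successfully matched players $\ch_j(A^{-1}_j(t))$'' stated in Section \ref{sec:setting}, so I would cite that explicitly. A secondary point worth spelling out is that the induction needs to track not just $P_{i,j}(t)$ but implicitly the agreement on the entire observed matching history, since Line \ref{alg:decen:update:available} references $t'<t-1$; but since matchings are deterministic given actions (Lemma \ref{lem:decen:nocollision}) and all players observe them, this agreement is automatic. I would also note in passing that Lemma \ref{lem:decen:nocollision} itself relies on Lemma \ref{lem:decen:sameP}, but the dependence is not circular: the proof of Lemma \ref{lem:decen:nocollision} only uses $P_{i,j}(t-1)$ agreement to conclude matching at round $t$, while the inductive step here uses round-$t$ matching agreement to conclude $P_{i,j}(t)$ agreement, so the two lemmas are proved together by a joint induction on $t$.
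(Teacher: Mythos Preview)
Your proposal is correct and follows essentially the same approach as the paper: both argue that the initialization is identical across players and that every subsequent update to $P_{i,j}$ is triggered and computed solely from the commonly observed matching history $\set{\bar{A}(t')}_{t'\le t}$, so all players stay synchronized.

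One simplification worth noting: your invocation of Lemma \ref{lem:decen:nocollision} is unnecessary and is what creates the circularity you then work to dissolve. The update rule in Lines \ref{alg:decen:update:available:start}--\ref{alg:decen:update:available:end} depends only on the \emph{observed} matchings $\bar{A}_{i'}(\cdot)$, not on the proposed actions $A_{i'}(\cdot)$; since the observation model in Section \ref{sec:setting} already guarantees that every player sees $\bar{A}^{-1}_j(t)$ for every arm, agreement on the history is immediate without ever needing $\bar{A}=A$. The paper's proof exploits exactly this and never cites Lemma \ref{lem:decen:nocollision}, so no joint induction is required. Dropping that reference would make your argument cleaner and logically self-contained.
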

\begin{proof} 

At the beginning, each player $p_i$ initializes $P_{i,j}=\cN$, thus the result holds. In the following rounds, player $p_i$ updates $P_{i,j}(t)$ only if it observes all players select the same arm for two consecutive rounds. Since the observations of all players are the same, they would update $P_{i,j}$ simultaneously. Above all, $P_{i,j}(t) = P_{i',j}(t)$ would always hold for any different player $p_i,p_{i'}$, arm $a_j$ and round $t$.

\end{proof}

% \subsection{Proof of Theorem \ref{thm:decen:strategy}}

% \begin{proof}[Proof of Theorem \ref{thm:decen:strategy}]
% According to the construction rule, $S_i$ is defined as the set of arms that can successfully accept player $p_i$ at the current round and still have the potential to be the most preferred one. 
% So for any arm $a_j \notin S_i$, there must be $p_i \notin \ch_j(P_{i,j})$.  
% This means that $p_i$ may be rejected and receive neither observation or reward when selecting $a_j$. So $p_i$ has no incentive to select arms beyond $S_i$. 

% Recall that our ODA algorithm is an online version of the DA algorithm with the arm-side proposing. \citet[Theorem 3]{vaish2017manipulating} show that when a single player $p_i$ misreports an optimal manipulation as its preference ranking, i.e., under which manipulation the player can match an arm that has a higher preference ranking than that under any other manipulation by following DA, then the resulting matching of DA is still a stable matching. Since the original matching is the players' least preferred one, each player can match an arm in this new matching that is better than the arm in the original matching generated under the true preference ranking. 

% \end{proof}

\section{Technical Lemma}

\begin{lemma}{(Corollary 5.5 in \cite{lattimore2020bandit})}\label{lem:chernoff}
Assume that $X_1, X_2,\ldots, X_n$ are independent, $\sigma$-subgaussian random variables centered around $\mu$. Then for any $\varepsilon > 0$,
\begin{align*}
    \PP{ \frac{1}{n} \sum_{i=1}^n X_i \ge  \mu + \varepsilon} \le \exp\bracket{-\frac{n\varepsilon^2}{2\sigma^2}}\,, \ \ \ \PP{ \frac{1}{n} \sum_{i=1}^n X_i \le  \mu - \varepsilon} \le \exp\bracket{-\frac{n\varepsilon^2}{2\sigma^2}}\,.
\end{align*}
\end{lemma}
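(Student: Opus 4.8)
The plan is to establish this via the standard Cram\'er--Chernoff (exponential Markov) method, exploiting the moment generating function characterization of sub-Gaussianity. Recall that a $\sigma$-subgaussian random variable $X$ centered at $\mu$ satisfies $\EE{\exp\bracket{\lambda(X-\mu)}} \le \exp\bracket{\lambda^2\sigma^2/2}$ for every $\lambda \in \RR$; this is the property I would take as the working definition. Since the two tail bounds are symmetric, I would prove the upper tail in full and then reduce the lower tail to it.

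For the upper tail, I would first rewrite the event as $\set{\sum_{i=1}^n (X_i-\mu) \ge n\varepsilon}$. Then for any $\lambda > 0$, applying Markov's inequality to the nonnegative random variable $\exp\bracket{\lambda\sum_{i=1}^n(X_i-\mu)}$ gives
\begin{align*}
\PP{\sum_{i=1}^n (X_i-\mu) \ge n\varepsilon} \le \exp\bracket{-\lambda n \varepsilon}\, \EE{\exp\bracket{\lambda\sum_{i=1}^n(X_i-\mu)}}\,.
\end{align*}
The next step uses independence to factor the joint MGF as a product $\prod_{i=1}^n \EE{\exp\bracket{\lambda(X_i-\mu)}}$, and then bounds each factor by the sub-Gaussian MGF inequality, yielding the aggregate bound $\exp\bracket{n\lambda^2\sigma^2/2}$. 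Combining these two displays gives
\begin{align*}
\PP{\frac{1}{n}\sum_{i=1}^n X_i \ge \mu+\varepsilon} \le \exp\bracket{\frac{n\lambda^2\sigma^2}{2} - \lambda n\varepsilon}\,.
\end{align*}

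Finally, I would optimize the free parameter: the exponent is a convex quadratic in $\lambda$ minimized at $\lambda^\star = \varepsilon/\sigma^2 > 0$, at which point it equals $-n\varepsilon^2/(2\sigma^2)$, delivering the claimed upper bound. For the lower tail, I would note that each $-X_i$ is $\sigma$-subgaussian centered at $-\mu$, so applying the already-proved upper tail to the variables $-X_i$ with the same $\varepsilon$ gives exactly $\PP{\frac{1}{n}\sum_i X_i \le \mu-\varepsilon} \le \exp\bracket{-n\varepsilon^2/(2\sigma^2)}$. There is no genuine obstacle here since the result is a textbook concentration bound; the only point requiring care is to state the sub-Gaussian MGF inequality cleanly and to verify that the minimizer $\lambda^\star$ is positive so that the upper-tail Markov step (which requires $\lambda>0$) remains valid.
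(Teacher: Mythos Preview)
Your proof is correct and is the standard Cram\'er--Chernoff argument for sub-Gaussian concentration. Note, however, that the paper does not actually prove this lemma: it is stated in the technical appendix purely as a citation of Corollary~5.5 in \citet{lattimore2020bandit}, with no accompanying argument. So your write-up supplies strictly more than the paper does; there is nothing further to compare.
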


\end{document}